\setlist[itemize]{noitemsep} %
\titleformat{\section}[block]{\large\scshape\centering}{\thesection.}{1em}{} %
\titleformat{\subsection}[block]{\large}{\thesubsection.}{1em}{} %
\newtheorem{corollary}{Corollary}
\newtheorem{lemma}{Lemma}
\theoremstyle{definition}
\newtheorem{definition}{Definition}
\theoremstyle{plain}
\newtheorem{propn}{Proposition}
\theoremstyle{definition}
\newtheorem{example}{Example}
\newcommand{\pci}{C^{\aleph_0}I}
\newcommand{\mci}{C^{m}I}
\newcommand{\sci}{CI}
\newcommand{\infm}{\mathcal{M}_O}
\newcommand{\cin}{\mathcal{N}}
\newcommand{\flip}{\leadsto}
\newcommand{\set}[1]{\{#1\}}
\newcommand{\nrp}[2]{#1++#2}
\newcommand{\nrs}[1]{#1\#\#}
\newcommand{\snrp}[2]{\{#1\}++#2}
\newcommand{\snrs}[2]{\{#1\}\#\#}
\newcommand{\scl}[2]{\{#1\}\leq#2}
\newcommand{\sce}[2]{\{#1\}=#2}
\newcommand{\cg}[2]{#1\geq#2}
\newcommand{\cl}[2]{#1\leq#2}
\newcommand{\ce}[2]{#1=#2}
\newcommand{\M}{\supset}
\newcommand{\cseqo}[1]{\hookrightarrow_{#1}}
\newcommand{\cseqt}[2]{\hookrightarrow_{#1,#2}}
\title{Encoding monotonic multi-set preferences using CI-nets: preliminary report \thanks{Supported by the Austrian Science Fund FWF project W1255-N23.} } %
\author{
\normalsize Martin Diller \\[1ex]
\normalsize Insitute of Information Systems, TU Wien, Vienna, Austria \\
\normalsize \href{mailto:mdiller@kr.tuwien.ac.at}{mdiller@kr.tuwien.ac.at}
\and
\normalsize Anthony Hunter \\[1ex]
\normalsize University College London, London, U.K \\
\normalsize \href{anthony.hunter@ucl.ac.uk}{anthony.hunter@ucl.ac.uk} 
}
\date{} %
\begin{document}

\maketitle
\section{Introduction}
$\sci$-nets \cite{BouveretEL09} are part of several languages for specifying and reasoning about preferences that are inspired by CP-nets \cite{BoutilierBDHP04,2011Rossi}  (e.g. \cite{Wilson04,BrafmanDS06,BrafmanDSS06,BrewkaTW10,Wilson11,SanthanamOB13}).  These languages have in common that assertions regarding preferences are interpreted via the ``ceteris-paribus'' or ``all things equal'' semantics.  I.e. ``A is preferred to B'' is interpreted as a shorthand for ``A is preferred to B, ceteris paribus''.  This allows the formulation of an ``operational semantics'' in terms of ``worsening flips'' for verifying statements regarding preferences computationally.  

$\sci$-nets distinguishing feature is that they are a framework tailored particularly for specifying and reasoning about ordinal preferences over sets of goods.  These are also typically monotonic, i.e. more goods are usually preferred to less goods (of the same type).  

Also taking in account the fact that more often than not goods come in multi-sets rather than sets, a natural question is whether $\sci$-nets can be easily generalised to specify and reason about preferences over multi-sets as well as sets of goods.  We here take up this challenge, providing some initial ideas on how to build a framework for the multi-set scenario on top of $\sci$-nets in the sense that at least a restricted form of reasoning on our framework, which we call confined reasoning, can be efficiently reduced to reasoning on $\sci$-nets. 

The framework we propose is based on $\sci$-nets, but can deal with what we identify as the two main differences of preferences over multi-sets and preferences over sets of goods.   The first of the differences  is obviously that, while preferences over sets involve comparing different combinations of a fixed number of elements (namely one of each item), when considering multi-set preferences also the multiplicity of the items needs to be taken in account.  So, for example, while in the set scenario preferring apples over oranges always is interpreted as ``irrespective of the number of apples and oranges'', in the multi-set scenario it is possible to say, for example, that one prefers having an apple over an orange if one doesn't already have any apples, but one prefers having an orange over some number (say, up up to three) apples if one already has some (e.g. two or more) apples.  

A slightly more subtle issue is that, while when talking about preferences over sets there is a natural limit to the number of items one is considering (namely, one of each), in the case of preferences over multi-sets it is often the case that it is artificial to impose any a-priori upper bound on the multiplicity of the items.  For example, when one says that one prefers having an apple and an orange over say even up to three pears, this also means that one prefers having two apples and two oranges over three pears, three apples and one orange over three pears, etc.  If one is using the preferences as a guide as to what choice to take regarding some outcome, e.g. choosing between different baskets of fruits, then the upper bound of apples, oranges, and pears is given by the ``evaluation context'' (in this case, the upper bound of the fruits in the baskets that are available), but is not part of the preference relation per se.  I.e., the same preference relation should be of use when considering a different ``evaluation context'', e.g. a different set of fruit baskets.       

Now, often when talking about preferences over multi-sets the multiplicities of items are not stated explicitely, e.g. one says that one prefers apples over oranges, which often may either mean that one prefers an apple over any (relevant) number of oranges, or that one prefers any number of apples over a comparable number of oranges.  In the same manner, one can say that one prefers oranges over apples if one already has \emph{enough} apples.  We call such preferences \emph{purely qualitative}.  Although in the framework we propose here multiplicities of items are considered explicitely, we also show that certain basic forms of purely qualitative preferences can be encoded in it in a straightforward manner.

To further motivate our generalization of $\sci$-nets in this work we also give an example of its use in the context of an argument-based system for the aggregation of evidence stemming from clinical trials \cite{HunterW12,HunterW15}.  Specifically, we show how it can be applied to order the available evidence, which is then subject to further critical analysis by the system, based on personalized criteria.

As to the structure of this paper, we start of by giving the relevant background on $\sci$-nets in Section \ref{sec::ci}.  In Section \ref{sec:pci} we then present our framework which, following the practice of \citeauthor{SanthanamOB13} (\citeyear{SanthanamOB13}) (who present a variant of $\sci$-nets for representing preferences among sets of countermeasures), we call $\pci$-nets (the $\aleph_0$ standing for the fact that our generalisation of $\sci$-nets is designed for stating preferences over multi-sets with unbounded multiplicities).  We here also discuss how to encode certain forms of purely qualitative preferences over multi-sets via $\pci$-nets and characterise reasoning about $\pci$-nets in terms of the above mentioned confined reasoning.   

In Section \ref{sec:mci} we present a generalisation of $\sci$-nets for encoding preferences over multi-sets with \emph{bounded} multiplicities, which forms the basis of the reduction of confined reasoning for $\pci$-nets to reasoning on $\sci$-nets that we also develop in this section.  In Section \ref{sec:app} we show how $\pci$-nets can be applied as a component of the before-mentioned system for evidence aggregation. Section \ref{sec:conc} presents our conclusions and future work.

\section{Background: $\sci$-nets.}
\label{sec::ci}

We begin by presenting $\sci$-nets.  Throughout this work we consider $O$ to be a fixed finite set of objects, items or goods.  $\sci$-nets consist in a set of $\sci$-statements.  

\begin{definition}[$\sci$-statement] \label{def::scistmnt}
A \emph{conditional importance statement ($\sci$-statement) on $O$} is an expression of the form  
\begin{align*}
S^+,S^- : S_1 \vartriangleright S_2
\end{align*}
\noindent where $S^+ , S^- , S_1 , S_2$ are pairwise disjoint subsets of $O$, while $S_1,S_2$ are non-empty.
\end{definition}
  
\noindent The informal interpretation of a $\sci$-statement $S^+,S^- : S_1 \vartriangleright S_2$ is: ``if I have all the items in $S^+$ and none of those in $S^-$ , I prefer obtaining all items in $S_1$ to obtaining all those in $S_2$ , ceteris paribus''. $S^+$ and $S^-$ are called the positive precondition and the negative precondition respectively. $S_1$ and $S_2$ are called the compared sets.
As stated before, a \emph{$\sci$-net} on $O$ is then a set $\mathcal{N}$ of $\sci$ statements on $O$.

A (strict) \emph{preference relation} is a strict partial order (an irreflexive, asymmetric and transitive binary relation) over $2^{O}$. A preference relation is \emph{monotonic} if $S_a \supset S_b$ entails $S_a > S_b$ for any $S_a, S_b \in 2^{O}$.  The formal semantics of $\sci$ statements are, as to be expected, given in terms of monotonic preference relations over $2^{O}$.

\begin{definition}[Semantics of $\sci$-statements] A preference relation over $2^{O}$ \label{def:45} satisfies a $\sci$-statement $S^+ , S^- : S_1 \vartriangleright S_2$ if for every $S' \subseteq (O \setminus (S^+ \cup S^- \cup S_1 \cup S_2))$, we have $(S' \cup S^+ \cup S_1) > (S' \cup S^+ \cup S_2)$.
\end{definition}

A preference relation over $2^{O}$ then satisfies a $\sci$-net $\mathcal{N}$ if it satisfies each $\sci$-statement in $\mathcal{N}$ and is monotonic. A CI-net $\mathcal{N}$ is \emph{satisfiable} if there exists a preference relation satisfying $\mathcal{N}$. Although there may be several preference relations satisfying a $\sci$-net $\cin$, following \citeauthor{BouveretEL09} we are mainly interested, in the so called ``\emph{induced preference relation}'', which we denote $>_{\mathcal{N}}$.  If $\mathcal{N}$ is satisfiable, this is the smallest preference relation satisfying $\mathcal{N}$.

\begin{example} \label{ex:01} The following is an example of a $\sci$-net from \cite{BouveretEL09}.
\begin{align}
\{a\},\emptyset : \{d\} \vartriangleright \{bc\}; \label{s1}\\
\{a\},\{d\} : \{b\} \vartriangleright \{c\};\label{s2}\\
 \{d\},\emptyset : \{b\} \vartriangleright \{c\} \label{s3}
\end{align}
\end{example}

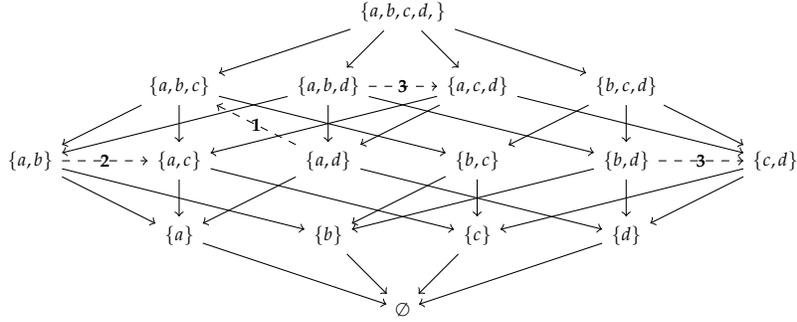
\begin{figure}[t!]
\resizebox{0.7\textheight}{!}{\begin{minipage}{\textwidth}
\centering
\begin{tikzpicture}[main node/.style={fill=none,font=\scriptsize}]

\node[main node] (abcd) at (0,0) {$\{a,b,c,d,\}$};

\node[main node] (abc) at (-3,-1) {$\{a,b,c\}$};
\node[main node] (abd) at (-1,-1) {$\{a,b,d\}$};
\node[main node] (acd) at (1,-1) {$\{a,c,d\}$};
\node[main node] (bcd) at (3,-1) {$\{b,c,d\}$};

\node[main node] (ab) at (-5,-2) {$\{a,b\}$};
\node[main node] (ac) at (-3,-2) {$\{a,c\}$};
\node[main node] (ad) at (-1,-2) {$\{a,d\}$};
\node[main node] (bc) at (1,-2) {$\{b,c\}$};
\node[main node] (bd) at (3,-2) {$\{b,d\}$};
\node[main node] (cd) at (5,-2) {$\{c,d\}$};

\node[main node] (a) at (-3,-3) {$\{a\}$};
\node[main node] (b) at (-1,-3) {$\{b\}$};
\node[main node] (c) at (1,-3) {$\{c\}$};
\node[main node] (d) at (3,-3) {$\{d\}$};

\node[main node] (e) at (0,-4) {$\emptyset$};

\path

(abd) edge[->,dashed] node{\bf\scriptsize \ref{s3}}(acd)
(ad) edge[->,dashed] node{\bf\scriptsize \ref{s1}}(abc)
(ab) edge[->,dashed] node{\bf\scriptsize \ref{s2}}(ac)
(bd) edge[->,dashed] node{\bf\scriptsize \ref{s3}}(cd)

(abcd) edge[->] (abc)
(abcd) edge[->] (abd)
(abcd) edge[->] (acd)
(abcd) edge[->] (bcd)

(abc) edge[->] (ab)
(abc) edge[->] (ac)
(abc) edge[->] (bc)

(abd) edge[->] (ab)
(abd) edge[->] (ad)
(abd) edge[->] (bd)

(acd) edge[->] (ac)
(acd) edge[->] (ad)
(acd) edge[->] (cd)

(bcd) edge[->] (bc)
(bcd) edge[->] (bd)
(bcd) edge[->] (cd)

(ab) edge[->] (a)
(ab) edge[->] (b)

(ac) edge[->] (a)
(ac) edge[->] (c)

(ad) edge[->] (a)
(ad) edge[->] (d)

(bc) edge[->] (b)
(bc) edge[->] (c)

(bd) edge[->] (b)
(bd) edge[->] (d)

(cd) edge[->] (c)
(cd) edge[->] (d)

(a) edge[->] (e)
(b) edge[->] (e)
(c) edge[->] (e)
(d) edge[->] (e)

;

\end{tikzpicture}
\end{minipage}}
\caption{Graphical representation of the preference relation induced by the $\sci$-net from Example \ref{ex:01}.  The solid arcs in the figure are obtained by monotonicity, dotted ones by $\sci$-statements. Transitivity arcs are omitted.}
\label{fig:12}
\end{figure}

\noindent See Figure \ref{fig:12} for a graphical representation of the preference relation induced by the $\sci$-net from Example \ref{ex:01}. 

An alternative, but equivalent (see \cite{BouveretEL09}), way of interpreting CI-statements is in terms of worsening flips, similar to flipping sequences in CP-nets \cite{BoutilierBDHP04}.  The idea behind this operational semantics is that in order to decide whether $S_a >_{\cin} S_b$ ($S_a$ ``dominates'' $S_b$)  for $S_a,S_b \subseteq O$, one needs to find a \emph{sequence of worsening flips} $S_a=S_1,\ldots,S_n=S_b$  where each worsening flip $S_i \flip S_{i+1}$ ($1 \leq i < n$) corresponds to $S_i >_{\cin} S_{i+1}$ being sanctioned either (i) because $S_i \supset S_{i+1}$ ($\M$ flip), or (ii) by a $\sci$-statement in $\cin$ (CI flip).  Transitivity of the preference relation then allows us to conclude that $S_a >_{\cin} S_b$.  In particular, a CI-net $\mathcal{N}$ is satisfiable if and only if it does not possess any cycle of worsening flips.

\begin{definition}[Worsening flips for $\sci$-nets] \label{def:opsci} Let $\mathcal{N}$ be a CI-net on the set of objects $O$, and let $S_a$ , $S_b$ $\subseteq O$. Then $S_a \flip S_b$ is called a worsening flip wrt. $\mathcal{N}$ if one of the following two conditions is satisfied:
\begin{itemize}
\item $S_a \supset S_b$ ($\M$ flip)
\item there is a CI-statement $S^+,S^- : S_1 \vartriangleright S_2 \in \mathcal{N}$ and $S' \subseteq (O \setminus (S^+ \cup S^- \cup S_1 \cup S_2))$ s.t. $S_a = (S' \cup S^+ \cup S_1)$ and $S_b = (S' \cup S^+ \cup S_2)$ (CI flip).
\end{itemize}
An equivalent but more clearly operational characterisation of the latter condition is that if $\overline{S} = (O \setminus (S^+ \cup S^- \cup S_1 \cup S_2))$, then:
\begin{itemize}
\item $(S_1 \cup S^+) \subseteq S_a$ , $(S_2 \cup S^+) \subseteq S_b$ ;
\item $(S_a \cap S^-) = (S_b \cap S^-) = (S_a \cap S_2) = (S_b \cap S_1) = \emptyset$;
\item and $(\overline{S} \cap S_a) = (\overline{S} \cap S_b)$.
\end{itemize}
\end{definition}

\noindent We denote the fact there exists a sequence of worsening flips from $S_a$ to $S_b$ w.r.t. $\cin$ as $S_a \cseqo{\cin} S_b$.  We will also often say that a flip is w.r.t. the $\sci$-statement that ``justifies'' it.  Then a sequence of flips is w.r.t. a a certain set of $\sci$-statements when the flips in the sequence are w.r.t. the $\sci$-statements in the set.

Dominance in satisfiable CI-nets is PSPACE-complete, even if every CI-statement bears on singletons and has no negative preconditions, every CI-statement bears on singletons and has no positive preconditions or, finally,  every CI-statement is precondition-free.  
 Satisfiability of CI-nets is also PSPACE-complete.  Dominance and satisfiability in precondition-free singletons (so called, ``SCI-nets'') is in P. 

\citeauthor{BouveretEL09} also identify coNP-complete as well as polynomial sufficient conditions to determine whether any $\sci$-net is satisfiable.  For instance, for the polynomial sufficient condition, let the preference graph $G(\cin)$ corresponding to a $\sci$-net $\cin$, be defined as the graph whose directed
edges are the pairs $(o_1, o_2)$ such that there is a $\sci$-statement $S^+,S^- : S_1 \vartriangleright S_2 \in \mathcal{N}$ with $o_1 \in S_1$ and $o_2 \in S_2$.  Then, any $\sci$-net with an acyclic preference graph is satisfiable.

\section{$\pci$-nets}
\label{sec:pci}

In this Section we present $\pci$-nets.  First, we give their definition and extensional semantics, then we consider to what extent purely qualitative assertions regarding multi-set preferences can be encoded via $\pci$-nets.  Finally, we present the operational semantics and, in this context, confined reasoning for $\pci$-nets.  

\subsubsection{Definition \& extensional semantics}

We continue to consider a fixed set $O$ of objects and identify a multi-set $M$ on $O$ via its multiplicity function $m_M$.  The latter associates to each $o \in O$ the number $m_M(o)$ of occurrences or instances of $o$ that are in $M$.  We will often represent a multi-set $M$ in the form $\{(o,m_M(o)) \mid o \in O, m_M(o)\geq1 \}$.  We will also use standard terminology and notation for sets to be interpreted as the corresponding notion for multi-sets.  $\infm$ denotes all finite multisets defined on $O$.

$\pci$-nets consist of a set of $\pci$-statements which, similarily to $\sci$-statements involve a ``precondition'' and a ``comparison expression''.  Preconditions of $\pci$-statements consist of what we call a sequence of \emph{simple constraints}, i.e. expressions of the form
\begin{align*}
o_1 R_1 a_1, \ldots ,o_n R_n a_n  
\end{align*}
\noindent  where $o_i \in O$, $R_i \in \{\geq,\leq,=\}$, the $a_i$ are integers $\geq 0$.  We say that a precondition is defined \emph{on} the objects $o_i$ ($1 \leq i \leq n$) and that a multi-set $M' \in \infm$ \emph{satisfies} the precondition,
\begin{align*}
M' \models o_1 R_1 a_1, \ldots ,o_n R_n a_n,    
\end{align*}
\noindent iff 
\begin{align*}
 m_{M'}(o_i) R_i a_i \textit{ for every } 1 \leq i \leq n.
\end{align*}
\noindent A precondition $P^+$ is \emph{satisfiable} if there is some $M' \in \infm$ s.t. $M' \models P^+$. A precondition consisting of an empty sequence of constraints is satisfiable by any multiset by definition.

Comparison expressions of $\pci$-statements involve \emph{update patterns}.  These are expressions of the form
\begin{align*}
\nrp{o_1}{a_1}, \ldots, \nrp{o_n}{a_n}  
\end{align*}
\noindent with each $o_i \in O$ appearing at most once, the $a_i \geq 1$.  Just as for the preconditions of $\pci$-statements, we say that an update pattern is defined \emph{on} the objects $o_i$ ($1 \leq i \leq n$). Now we define the \emph{update of a multiset $M' \in \infm$ w.r.t. an update pattern} as
\begin{align*}
M' [\nrp{o_1}{a_1}, \ldots, \nrp{o_n}{a_n}] := M''   
\end{align*}
\noindent where $m_{M''}(o) = m_{M'}(o)$ for $o \in O$ but $o \neq o_i$ for every $1 \leq i \leq n$, and $m_{M''}(o_i) = m_{M'}(o_i) + a_i$ for $1 \leq i \leq n$.
\begin{definition}[$\pci$-statement] 
A $\pci$-statement w.r.t. a set of objects $O$ is an expression of the form 
\begin{align*}
P^+:P_1 \vartriangleright P_2
\end{align*}
\noindent where $P^+$ is a possibly empty sequence of constraints on $O$ and $P_1,P_2$ are update patterns defined on non-empty, \emph{disjoint} subsets of the objects $O$.  The $\pci$-statement is \emph{satisfiable} if the precondition $P^+$ is satisfiable. 
\end{definition}
\noindent If $P^+$ is an empty sequence we write $P_1 \vartriangleright P_2$.  We will also often use the shorthand $\{o_1,\ldots,o_n\}Ta$ for any sequence $o_1Ta,\ldots,o_nTa$ with $T \in \{\geq,\leq,=,++\}$.
The intuitive meaning of a $\pci$-statement $P^+:P_1 \vartriangleright P_2$ with 
\begin{align*}
P^+ &= \{o_i^+ R_i^+ a_i^+\}_{1 \leq i \leq n_{+}}, \\ 
P_1 &= \{\nrp{o_j^1}{a_j^1}\}_{1 \leq j \leq n_1}, \\
P_2 &= \{\nrp{o_k^2}{a_k^2}\}_{1 \leq k \leq n_2} 
\end{align*}
\noindent is: ``if I have  $R_i^+ a_i$ of $o_i$ ($1 \leq i \leq n_{+}$), I prefer having $a_j^1$ more of $o_j^1$ ($1 \leq j \leq n_1$), than having $a_k^2$ more of $o_k^2$ ($1 \leq k \leq n_2$), ceteris paribus''.  The formal semantics of $\pci$-statements are, analogously to $\sci$-statements, defined over preference (binary irreflexive, antisymmetric, and transitive) relations $>$ over $\mathcal{M}_O$.

\begin{definition}[Semantics of $\pci$-statements] \label{def:46} A preference relation $>$ over $\mathcal{M}_O$  \emph{satisfies} a $\pci$ statement $P^+:P_1 \vartriangleright P_2$ if for every $M' \in \mathcal{M}_O$ s.t. $M' \models P^+$, we have $M'[P_1] >  M'[P_2]$ \end{definition}

\noindent Alternatively, by abuse of notation we define  $P^+ := \{M' \in \infm \mid M' \models P^+\}$ and for an update pattern $P = \{\nrp{o_i}{a_i}\}_{1 \leq i \leq n}$, $M_P := \{(o_i,a_i) \mid 1 \leq i \leq n\}$. Then $>$ satisfies the $\pci$-statement $P^+:P_1 \vartriangleright P_2$ if for every $M' \in P^+$, we have $(M' \cup M_{P_1}) > (M' \cup M_{P_2})$.  Note that if $P^+$ is unsatisfiable, then the $\pci$-statement $P^+ : P_1 \vartriangleright P_2$ is trivially satisfied by any preference relation over $\mathcal{M}_O$.

Now, a $\pci$-net on the set of objects $O$ is, as already indicated, a set $\mathcal{N}$ of $\pci$-statements on $O$.  A preference relation $>$ over $\mathcal{M}_O$ \emph{satisfies} a $\pci$-net $\mathcal{N}$ if $>$ satisfies each $\pci$ statement in $\mathcal{N}$ and $>$ is monotonic (on $\infm$). Finally, a $\pci$-net $\mathcal{N}$ is \emph{satisfiable} if there exists a preference relation $>$ satisfying $\mathcal{N}$. Just as for $\sci$-nets we are mainly interested in the smallest \emph{preference relation induced} by a satisfiable $\pci$ net $\cin$, which we also denote $>_{\cin}$.  The latter relation exists since the intersection of preference relations satisfying $\cin$ also satisfies $\cin$.

\begin{example}
\label{ex:001}
Let $\cin$ be the $\pci$-net
\begin{align}
\nrp{a}{1} \vartriangleright \snrp{b,c,d}{6}; \label{p1} \\  
\cg{a}{1} : \nrp{b}{1} \vartriangleright \snrp{c,d}{3}; \label{p2}\\ 
\cg{a}{3},\cl{b}{2} : \nrp{c}{3} \vartriangleright \nrp{d}{3} \label{p3} 
\end{align}
\noindent We will later be able to show that $\cin$ is satisfiable (see Example \ref{ex:76}). From this specification it follows that, e.g. 
\begin{align*}
\{(a,3),(b3)\} >_{\cin} \{(a,3),(b,2),(d,5)\}.
\end{align*} 
\noindent A proof of this is shown in Example \ref{ex:0001A}, after having introduced the operational semantics for $\pci$-nets.  
\end{example}

Given the size of $\mathcal{M}_O$, we cannot even hope to give a graphical representation of the induced preference relation of the $\pci$-net in Example \ref{ex:001} analogous to that of the induced preference relation of the $\sci$-net in Example \ref{ex:01} (Figure \ref{fig:12}).  As hinted at in the introduction to this work, we will nevertheless later on define 
a form of restricted reasoning on $\pci$ nets for which a graphical representation is at least in theory possible (note that in practice even graphical representations of most $\sci$-nets will be too large to be of much use).

Proposition 5 in \cite{BouveretEL09} states that all monotonic preferences over $2^O$ can be captured via $\sci$-nets.  The proof of this proposition can be easily adapted to show that all motonic preferences over $\infm$ can be captured via $\pci$-nets.  In effect, consider a preference relation $>$ over $\infm$ and $M_a, M_b \in \infm$ s.t. $M_a > M_b$ but $M_a \not\supset M_b$.  Then there must be some disjoint and non-empty $O_1,O_2 \subseteq O$ s.t. $m_{M_a}(o) > m_{M_b}(o)$ for $o \in O_1$, $m_{M_b}(o) > m_{M_a}(o)$ for $o \in O_2$, while $m_{M_a}(o) = m_{M_b}(o)$ for $o \in (O \setminus (O_1 \cup O_2))$   Now the $\pci$-statement $P^+:P_1 \vartriangleright P_2$ with 
\begin{align*}
P^+ &:= \{\ce{o}{m_{M_a \cap M_b}(o)}\}_{o \in O}, \\
P_1 &:= \{\nrp{o}{(m_{M_a}(o) - m_{M_b}(o))}\}_{o \in O_1}, \\
P_2 &:= \{\nrp{o}{(m_{M_b}(o) - m_{M_a}(o))}\}_{o \in O_2} 
\end{align*}
\noindent clearly ``induces'' precisely $M_a > M_b$.  Hence the preference relation induced by the $\pci$-net that consists of the $\pci$-statements obtained in this manner for each such $M_a,M_b$ clearly captures $>$.  The obvious problem with this argument is that, such as the $\sci$-net in the proof of Proposition 5 in \cite{BouveretEL09} can have an exponential number of  $\sci$-statements, the $\pci$-net constructed in the way we have just detailed can have a potentially infinite number of $\pci$-statements\footnote{This, despite the fact that, just as $\sci$-statements can induce up to an exponential number of what we call ``CI flips'' later on, $\pci$-statements have the power to express a potentially infinite number of CI flips.}.  We leave it as an open question whether there is any useful alternative characterisation of the kinds of preference relations that can be captured efficiently (hence, also finitely) by $\pci$ (and, for that manner, $\sci$) nets.  In the rest of this work we nevertheless asume the $\pci$-nets we are dealing with have a finite number of $\pci$-statements.     

\subsubsection{Purely qualitative preferences}

We now briefly consider how purely qualitative preferences as described in the introduction can be encoded using $\pci$ nets.  More to the point, we consider this issue under the assumption that purely qualitative preferences can be interpreted in precise terms, i.e. that they are shorthands for preferences where multiplicites of items are explicitely referred to.

Consider first purely qualitative ``comparison expressions'', for example, an assertion of the form ``I prefer apples over oranges''.  If this is interpreted as an assertion regarding multi-sets rather than sets of fruits, one possible interpretation of this assertion is ``I prefer an apple over an orange''.  This can clearly be encoded easily using the $\pci$-statement
\begin{align*}
\nrp{a}{1} \vartriangleright \nrp{o}{1}
\end{align*} 
\noindent Other similar interpretations such as ``I prefer an apple over a relevant/reasonable number of oranges'' or ``I prefer a sufficient number of apples over a relevant/reasonable number of oranges'' can be encoded in a similar manner.  ``I prefer an apple over an orange'' could also mean ``I prefer a sufficent  number $R$ (e.g. $R = 1$) of apples over \emph{any} number of oranges''.  Although $\pci$-nets don't have any means of expressing ``any'', we can introduce the special symbol $\nrs{}$ to be interpreted as the maximum number of each object given in a specific context (this will become clearer later on).  We also use just $\nrs{o}$ as a shorthand for $\nrp{o}{\nrs{}}$.  Then, the following encoding would do:
\begin{align*}
\nrp{a}{R} \vartriangleright \nrs{o}.
\end{align*}    
\noindent Another possible interpretation of ``I prefer apples over oranges'' is ``I prefer comparable numbers of apples to oranges''.  This assertion can only be encoded as a set of $\pci$-statements:
\begin{align*}
\{\nrp{a}{X} \vartriangleright \nrp{o}{X} \mid X \geq 1 \}.
\end{align*}
\noindent To avoid an infinite number of $\pci$-statements, the following alternative encoding will do for most practical scenarios
\begin{align*}
\{\nrp{a}{X} \vartriangleright \nrp{o}{X} \mid 1 \leq X \leq \nrs{} \}.
\end{align*}
All of the above representations of ``I prefer apples over oranges'' can be easily generalised to statements such as ``I prefer pineapples and mangos to apples and oranges''.  Nevertheless, we note that the generalisation of the last reading we considered would require exponentially many $\pci$-statements, e.g. 
\begin{align*}
\{\nrp{p}{X}&,\nrp{m}{Y} \vartriangleright \nrp{a}{V},\nrp{o}{W} \mid \\
&1 \leq X,Y,V,W \leq \nrs{}, X + Y = V + W \}
\end{align*}
\noindent says that ``I prefer pineapples and mangos to the same number of apples and oranges''. 

Consider now the preconditions of assertions regarding preferences, i.e. the ``if X'' part of an assertion of the form ``if X, I prefer Y over Z''.  Some common pre-conditions can be easily encoded using $\pci$-statements.  E.g. ``if I have a certain number $R$ of apples'' or, similarly, ``if I have enough (i.e. at least $R$) apples'' can clearly be encoded as $\ce{a}{R}$, $\cg{a}{R}$ respectively.  ``If I don't have any apples'' and ``if I don't have enough (i.e. less than $R$) apples'' can be encoded as $\ce{a}{0}$ and $\cl{a}{R-1}$ respectively.  Any combination of such assertions can also be efficiently encoded using preconditions of $\pci$-statements.  Slighly more complicated are assertions of the form ``if I have more apples than oranges''.  Here, again, several $\sci$-statements are necessary:
\begin{align*}
\{\cg{a}{X},\cl{o}{X-1}: \ldots \mid 1 \leq X \leq \nrs{}\}.
\end{align*}      
\noindent Clearly, combinations of such more complicated assertions, e.g. ``if I have more apples than oranges and more pineapples than mangos'', require an exponential number of $\pci$-statements.  

The fact that, in order to encode some purely qualitative assertions regarding preferences, (possibly exponentially many) sets of $\pci$-statements are needed stems from the fact that, as will become clearer in Section \ref{sec:mci}, $\pci$ statements are ultimately based on $\sci$ statements.  In particular, they inherit the ``atomic'' nature of $\sci$ statements.  Consider, for example, that one has disjoint subsets $S,S' \subseteq O$ and one wants to express the fact that any one element from $S$ is preferred to any one element of $S'$.  This can only be written as the set of $\sci$-statements
\begin{align*}
\{o \vartriangleright o' \mid o \in S, o' \in S'\}.
\end{align*}
\noindent An assertion with a precondition stating ``if I have any elements from S and I don't have some elements from $S'$'' requires exponentially many $\sci$ statements:
\begin{align*}
\{T,T': \ldots \mid \emptyset \subset T \subseteq S, \emptyset \subset T' \subseteq S'\}.
\end{align*}

\subsubsection{Operational semantics \& confined reasoning} \label{sub:opp}

We turn to giving an operational semantics for $\pci$-nets analogous to that presented in Section \ref{sec::ci} for $\sci$-nets, i.e. in terms of ``worsening flips''.  This will also be the basis of our definition of confined reasoning for $\pci$-nets.     

\begin{definition}[Worsening flips for $\pci$-nets] \label{def:06}
Let $\cin$ be a $\pci$-net on $O$ and $M_a,M_b \in \infm$. Then $M_a \flip M_b$ is called a \emph{worsening flip} w.r.t. $\cin$ if one of the two following conditions is satisfied:
\begin{itemize}
\item $M_a \supset M_b$ ($\M$ flip)
\item There is a $\pci$ statement $P^+:P_1 \vartriangleright P_2 \in \mathcal{N}$ and an $M' \in \infm$ s.t.  $M' \models P^+$, $M_a = M'[P_1]$, and $M_b = M'[P_2]$ (CI flip).  

Alternatively, $M_a = M' \cup M_{P^C_1}$, $M_b = M' \cup M_{P^C_2}$ for some  $M' \in P^+$ or, operationally:
\begin{itemize}
\item $M_{P_1} \subseteq M_a$,
\item $M_{P_2} \subseteq M_b$,
\item $(M_a \setminus M_{P_1}) =  (M_b \setminus M_{P_2})$, and
\item if $M' = (M_a \setminus M_{P_1}) =  (M_b \setminus M_{P_2})$, then $M' \in P^+$ (i.e. $M' \models P^+$).
\end{itemize}  
\end{itemize}
\end{definition}

\noindent Again, we denote that there exists a sequence of worsening flips from $M_a$ to $M_b$ w.r.t. the $\pci$-net $\cin$ as $M_a \cseqo{\cin} M_b$.  The following proposition can be proven in analogous fashion to the proof of Theorems 7 and 8 in \cite{BoutilierBDHP04}.

\begin{propn}\label{pro:06}  Let $\mathcal{N}$ be a satisfiable $\pci$-net defined on $O$, and $M_a, M_b \in \infm$. We have $M_a >_{\cin} M_b$ iff $M_a \cseqo{\cin} M_b$.
\end{propn}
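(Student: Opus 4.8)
The plan is to mirror, mutatis mutandis, the standard CP-net argument (Theorems 7 and 8 in \cite{BoutilierBDHP04}) that relates dominance in the induced preference relation to the existence of a worsening flip sequence, adapting it from the Boolean/finite setting to multi-sets over $O$ with unbounded multiplicities. The statement has two directions, and I would treat them separately.

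For the ``if'' direction ($M_a \cseqo{\cin} M_b$ implies $M_a >_{\cin} M_b$), I would argue that each individual worsening flip is contained in every preference relation $>$ satisfying $\cin$. Indeed, if $M_a \flip M_b$ is a $\M$ flip then $M_a \supset M_b$ forces $M_a > M_b$ by monotonicity; if it is a CI flip justified by $P^+ : P_1 \vartriangleright P_2 \in \cin$ with witness $M'$, then $M_a = M'[P_1]$, $M_b = M'[P_2]$ and $M' \models P^+$, so Definition \ref{def:46} gives $M'[P_1] > M'[P_2]$, i.e. $M_a > M_b$, directly. Hence the whole flip sequence $M_a = N_0 \flip N_1 \flip \cdots \flip N_k = M_b$ lies in $>$, and by transitivity $M_a > M_b$. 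Since this holds for every $>$ satisfying $\cin$, and $>_{\cin}$ is the intersection of all such relations, we get $M_a >_{\cin} M_b$.

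For the ``only if'' direction I would show that the relation $R := \cseqo{\cin}$ (the ``reachable by a worsening flip sequence'' relation) is itself a preference relation satisfying $\cin$; then minimality of $>_{\cin}$ forces $>_{\cin} \subseteq R$, which is exactly the claim. Transitivity of $R$ is immediate from concatenation of flip sequences. For irreflexivity (equivalently asymmetry, given transitivity) I would invoke satisfiability of $\cin$: by the discussion preceding Definition \ref{def:06} (the multi-set analogue of the CP/CI-net fact that satisfiability is equivalent to absence of a worsening-flip cycle), a satisfiable $\pci$-net admits no cycle of worsening flips, so $M R M$ never holds and $R$ is a strict partial order. Monotonicity of $R$ holds because any $M_a \supset M_b$ is a single $\M$ flip. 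Finally, $R$ satisfies each $\pci$-statement $P^+ : P_1 \vartriangleright P_2 \in \cin$: for every $M' \models P^+$, the pair $M'[P_1] \flip M'[P_2]$ is a CI flip, hence $M'[P_1]\, R\, M'[P_2]$. Thus $R$ is a preference relation satisfying $\cin$, and $>_{\cin} \subseteq R$.

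The main obstacle — and the only place where the move from the finite CP-net setting to $\infm$ needs care — is justifying that satisfiability of $\cin$ really does rule out worsening-flip cycles in the infinite multi-set universe, so that $R$ is irreflexive; the excerpt asserts this equivalence in the paragraph before Definition \ref{def:06} but one should double-check it is genuinely established (the ``if'' direction above already shows a cycle would force $M > M$, contradicting irreflexivity of any satisfying $>$, which gives one direction; the converse — that absence of cycles yields a satisfying relation, which is what we need here to know $R$ is well-behaved given $\cin$ is merely \emph{assumed} satisfiable — is the delicate part, though in fact for \emph{this} proposition we only need the easy direction, since we are \emph{given} $\cin$ satisfiable and merely need $R$ irreflexive, which follows from the easy direction plus the ``if'' part already proved). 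A secondary, purely bookkeeping point is checking that concatenating flip sequences and the set-difference characterisation of CI flips in Definition \ref{def:06} behave exactly as in the finite case; these are routine once one notes that all multi-sets involved are finite, so no issue of infinite descent arises within a single sequence.
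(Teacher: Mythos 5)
Your proof is correct and takes essentially the approach the paper intends: the paper offers no explicit proof, only the remark that the result follows analogously to Theorems 7 and 8 of \cite{BoutilierBDHP04}, and your two-direction argument (every flip is forced in each satisfying relation, hence in the intersection $>_{\cin}$; conversely the flip-reachability relation $\cseqo{\cin}$ is itself a monotonic, transitive, irreflexive relation satisfying $\cin$, with irreflexivity derived from satisfiability via the easy direction, so minimality of $>_{\cin}$ concludes) is exactly that standard argument transported to $\infm$. No gap.
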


\begin{example}\label{ex:0001A}
Consider again the $\pci$-net $\cin$ from Example \ref{ex:001}. The following is a sequence of flips from which
\begin{align*}
\{(a,3),(b3)\} >_{\cin} \{(a,3),(b,2),(d,5)\}
\end{align*} 
\noindent can be derived:
\begin{align*}
&\{(a,3),(b3)\}\\
&\flip (CI,\ref{p2}) \\
&\{(a,3),(b,2),(c,3),(d,3)\}\\
&\flip (CI,\ref{p3}) \\
&\{(a,3),(b,2),(d,6)\}\\
&\flip (\M) \\
&\{(a,3),(b,2),(d,5)\}\\
\end{align*}
\noindent The labels of the flips indicate the type of flip and, in case of a CI flip, the $\pci$-statement that justifies the flip.    
\end{example}

From a computational perspective, a major difference between $\sci$-nets and $\pci$-nets on $O$ is that while for the former, sequences of worsening flips without loops can be of length up to $2^{O}$ (given that there are up to this number of distinct subsets of $O$), in the latter sequences without loops of arbitrary length are possible.  Nevertheless, if a sequence of worsening flips from any one multiset $M_a$ to any other multiset $M_b$  exists then it will, by definition, be of finite length.  Stated otherwise, such a sequence will contain only finitely many subsets of $\infm$ and, hence, can be said to be \emph{confined} to any subset $M$ of $\infm$ for which it holds that each multiset $M'$ in the sequence is s.t. $M' \subseteq M$.    

\begin{definition}[Confinement of sequences of worsening flips] \label{def:07} Let $M \in \infm$. We say that a sequence of worsening flips $M_a = M_1 \ldots M_n = M_b$  w.r.t. a $\pci$-net $\cin$ is \emph{confined to} $M$  if each flip $M_i \flip M_{i+1}$ (for $1 \leq i < n$) in the sequence is s.t.  $M_i,M_{i+1} \subseteq M$. We denote $M_a \cseqt{\cin}{M} M_b$ if there is a a sequence of worsening flips from $M_a$ to $M_b$ confined to $M$.  Finally, we say that $\cin$ is \emph{c-consistent} w.r.t $M$ if there is no $M_a \subseteq M$ s.t. $M_a \cseqt{\cin}{M} M_a$.  
\end{definition}

\noindent We can say that $\cseqt{\cin}{M}$ defines a form of confined reasoning on $\cin$ w.r.t. a $M \in \infm$. Proposition \ref{pro:09}, Corollary \ref{cor:09}, and Corollary \ref{cor:091} are straightforward consequences of Definitions \ref{def:06} and \ref{def:07} as well as Proposition \ref{pro:06}. They give a characterisation of reasoning about $\pci$-nets in terms of confined reasoning.  

\begin{propn} \label{pro:09} $M_a \cseqo{\cin} M_b$ iff $M_a \cseqt{\cin}{M} M_b$ for some $M \in \infm$
\end{propn}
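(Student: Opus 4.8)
The plan is to prove the two directions of the biconditional separately, both essentially by unwinding the definitions.

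For the direction from right to left, suppose $M_a \cseqt{\cin}{M} M_b$ for some $M \in \infm$. By Definition \ref{def:07} this means there is a sequence of worsening flips $M_a = M_1 \ldots M_n = M_b$ w.r.t. $\cin$ such that each $M_i \subseteq M$. Dropping the extra confinement requirement, this is in particular a sequence of worsening flips from $M_a$ to $M_b$ w.r.t. $\cin$, so by definition $M_a \cseqo{\cin} M_b$. This direction is immediate.

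For the direction from left to right, suppose $M_a \cseqo{\cin} M_b$, i.e. there is a sequence of worsening flips $M_a = M_1 \ldots M_n = M_b$ w.r.t. $\cin$. Since the sequence is finite, the set $\{M_1, \ldots, M_n\}$ is a finite collection of finite multisets. I would then take $M$ to be their (multiset) union, $M := \bigcup_{1 \leq i \leq n} M_i$, which lies in $\infm$ because a finite union of finite multisets is finite. By construction $M_i \subseteq M$ for every $i$, so each flip $M_i \flip M_{i+1}$ in the sequence satisfies $M_i, M_{i+1} \subseteq M$, and hence the sequence is confined to $M$; that is, $M_a \cseqt{\cin}{M} M_b$.

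The statement is essentially a bookkeeping consequence of the observation, already made in the paragraph preceding Definition \ref{def:07}, that any sequence of worsening flips is finite and therefore involves only finitely many multisets, each of which is finite. There is no real obstacle here; the only point requiring the slightest care is noting that the witnessing multiset $M$ must itself be a \emph{finite} multiset (an element of $\infm$), which follows because a finite union of finite multisets is finite, and that taking the union does not disturb the worsening-flip status of any step, since Definition \ref{def:06} places conditions only on the two multisets involved in a given flip and not on any ambient multiset. Note also that Proposition \ref{pro:06} is not actually needed for this particular proposition, despite being cited alongside it as a source for the surrounding corollaries; the argument is purely at the level of flip sequences.
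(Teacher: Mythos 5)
Your proof is correct and follows essentially the same route the paper intends: the right-to-left direction is just dropping the confinement condition, and the left-to-right direction is the observation (made in the paragraph preceding Definition \ref{def:07}) that a flip sequence is finite and consists of finite multisets, so confining it to a suitable $M \in \infm$ such as the union of its members is immediate. Your remark that Proposition \ref{pro:06} plays no role here is also accurate; the argument is purely at the level of flip sequences.
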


\begin{corollary} \label{cor:09} If $\cin$ is satisfiable, then $M_a >_{\cin} M_b$  iff $M_a \cseqt{\cin}{M} M_b$ for some $M \in \infm$.
\end{corollary}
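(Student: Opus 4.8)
The plan is to obtain Corollary~\ref{cor:09} by composing two equivalences that are already in hand: Proposition~\ref{pro:06}, which (under satisfiability of $\cin$) identifies $>_{\cin}$ with the existence of a finite, unconfined sequence of worsening flips, and Proposition~\ref{pro:09}, which trades the unconfined relation $\cseqo{\cin}$ for the existential-over-$M$ confined relation $\cseqt{\cin}{M}$. Concretely, assuming $\cin$ satisfiable, I would simply chain: $M_a >_{\cin} M_b$ iff $M_a \cseqo{\cin} M_b$ by Proposition~\ref{pro:06}, iff there is an $M \in \infm$ with $M_a \cseqt{\cin}{M} M_b$ by Proposition~\ref{pro:09}. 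That is the entire argument; all the work sits inside the two cited propositions.

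Since Proposition~\ref{pro:09} is the substantive ingredient, let me indicate how I expect it to run. For the direction from $\cseqo{\cin}$ to confined reasoning: given a worsening-flip sequence $M_a = M_1 \flip M_2 \flip \cdots \flip M_n = M_b$ w.r.t. $\cin$, it has by definition only finitely many terms, so set $M := \bigcup_{i=1}^{n} M_i$, i.e. the multiset with $m_M(o) = \max_{1 \le i \le n} m_{M_i}(o)$ for each $o \in O$. Then $M \in \infm$ and $M_i \subseteq M$ for every $i$, and each step $M_i \flip M_{i+1}$ is still a worsening flip because Definition~\ref{def:06} makes no reference to $M$, so by Definition~\ref{def:07} the very same sequence witnesses $M_a \cseqt{\cin}{M} M_b$. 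The converse is immediate: a sequence confined to some $M$ is in particular a sequence of worsening flips, hence $M_a \cseqo{\cin} M_b$.

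I do not anticipate a real obstacle here beyond bookkeeping; the one point worth a sentence is the observation underlying both directions, namely that whether a single step $M_i \flip M_{i+1}$ is a $\M$ flip or a CI flip is a property of the pair $(M_i, M_{i+1})$ together with $\cin$ alone, independent of any ambient bound $M$. Confinement therefore restricts only \emph{which} sequences are admitted, never \emph{whether} a given pair forms a legal flip, so moving between the confined and unconfined settings costs nothing once one notes that finiteness of any flipping sequence automatically supplies a bounding multiset. The only mild care needed is to check that the chosen $M$ is genuinely a finite multiset --- which it is, being a finite componentwise maximum over a finite index set of finite multisets --- so that the existential quantifier ``for some $M \in \infm$'' is witnessed within the intended universe.
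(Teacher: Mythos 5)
Your proposal is correct and follows essentially the same route the paper intends: the paper states that Corollary~\ref{cor:09} (together with Proposition~\ref{pro:09}) is a straightforward consequence of Definitions~\ref{def:06} and~\ref{def:07} and Proposition~\ref{pro:06}, and your chaining of Proposition~\ref{pro:06} with Proposition~\ref{pro:09}, plus the observation that any finite flipping sequence is confined to the componentwise maximum of its members, is exactly that argument spelled out.
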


\begin{corollary} \label{cor:091} $\cin$ is satisfiable iff $\cin$ is \emph{c-consistent} w.r.t every $M \in \infm$.
\end{corollary}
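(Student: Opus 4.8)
The plan is to go through the worsening-flip characterisation of satisfiability and then let Proposition~\ref{pro:09} do the rest. The first step is to record, for $\pci$-nets, the analogue of the fact already noted for $\sci$-nets in Section~\ref{sec::ci}: a $\pci$-net $\cin$ is satisfiable if and only if it admits no cyclic sequence of worsening flips, i.e.\ if and only if there is no $M_a \in \infm$ with $M_a \cseqo{\cin} M_a$ (via a sequence containing at least one flip). Both directions are proved exactly as for $\sci$-nets, and indeed come together with Proposition~\ref{pro:06}: if a preference relation $>$ satisfies $\cin$, then every $\M$ flip is a $>$-step by monotonicity and every CI flip is a $>$-step by Definition~\ref{def:46}, so a worsening-flip cycle would force $M_a > M_a$, contradicting irreflexivity; conversely, if no such cycle exists then the transitive closure of $\flip$ is an irreflexive, transitive (hence strict) relation that contains all $\M$ flips and all CI flips, so it is a monotonic preference relation satisfying every $\pci$-statement of $\cin$, witnessing satisfiability.

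The second step translates ``no worsening-flip cycle'' into the vocabulary of confinement. By Proposition~\ref{pro:09}, for every $M_a \in \infm$ we have $M_a \cseqo{\cin} M_a$ if and only if $M_a \cseqt{\cin}{M} M_a$ for some $M \in \infm$; and by Definition~\ref{def:07}, whenever $M_a \cseqt{\cin}{M} M_a$ every multiset occurring in the witnessing sequence is a sub-multiset of $M$, in particular $M_a \subseteq M$. Hence the existence of a worsening-flip cycle is equivalent to: there is some $M \in \infm$ and some $M_a \subseteq M$ with $M_a \cseqt{\cin}{M} M_a$, i.e.\ $\cin$ fails to be c-consistent w.r.t.\ some $M \in \infm$. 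Taking contrapositives and combining with the first step yields: $\cin$ is satisfiable iff there is no worsening-flip cycle iff $\cin$ is c-consistent w.r.t.\ every $M \in \infm$, which is the claim.

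I do not expect a genuine obstacle here; once Propositions~\ref{pro:06} and~\ref{pro:09} are available the argument is essentially quantifier bookkeeping. The one point that needs a little care is the treatment of trivial (flip-free) sequences: I would fix the convention that ``cycle'', the relation $\cseqt{\cin}{M}$ appearing in the definition of c-consistency, and the relation $\cseqo{\cin}$ all refer to sequences with at least one flip, and check that Proposition~\ref{pro:09} is stated under (or trivially extends to) this convention — otherwise $M_a \cseqt{\cin}{M} M_a$ would hold vacuously for every $M_a \subseteq M$ and c-consistency would collapse.
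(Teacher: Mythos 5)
Your proof is correct and follows essentially the route the paper intends: the paper simply declares Corollary~\ref{cor:091} a straightforward consequence of Definitions~\ref{def:06} and~\ref{def:07} and Proposition~\ref{pro:06}, and your argument fills in exactly that, via the ``satisfiable iff no cycle of worsening flips'' characterisation (stated in the paper for $\sci$-nets and implicitly carried over to $\pci$-nets) together with Proposition~\ref{pro:09}. Your extra care about flip-free sequences matches the paper's implicit convention, since otherwise Proposition~\ref{pro:06} itself would fail by irreflexivity.
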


Now usually one will only be interested in determining whether $M_a >_{\cin} M_b$ holds for some  $(M_a,M_b) \in U$  where $U$ is a finite subset of $\infm\times\infm$ (in particular, $|U| = 1$).  Such a $U$ is what we called an ``evaluation context'' in the introduction.   Hence one would ideally also like to know some (small) $M \in \infm$ s.t. $\cseqt{\cin}{M}$ \emph{captures} $\cseqo{\cin}$ for $U$, i.e. $M_a \cseqo{\cin} M_b$ iff $M_a \cseqt{\cin}{M} M_b$ for every $(M_a,M_b) \in U$.  

\begin{example} Consider again the $\pci$-net $\cin$ from Example \ref{ex:001}.  This $\pci$-net has the property that the dependencies between objects as given by the comparison expressions are ``acyclic''.  We don't develop this notion formally here, but it can be easily generalised from the notion of a $\sci$-net having an acyclic dependency graph as defined in Section \ref{sec::ci}.  This means that given an initial multi-set $M_a$, lets say $M_a = \{(a,3),(b,3)\}$, one can compute an upper bound on the number of instances of each object one will be able to add to the objects in $M_a$ via worsening flips.  Let $\#o$ denote this number for each $o \in O$.  Then
\begin{align*}
\#a &= 3,  \\
\#b &= 3 + (\#a *6) = 21,\\
\#c &= ( \#a *6) + (\#b *3) = 81, \\
\#d &=  (\#a * 6) + (\#b * 3) + (\#c * 3) = 324\\
\end{align*}
\noindent and therefore $\cseqt{\cin}{M}$, with $M = \{(a,3),(b,21),(c,81),(d,324)\}$, \emph{captures} $\cseqo{\cin}$ for $U = \{(M_a,M') \mid M' \in \infm\}$.  
\end{example}

\subsubsection{$\sci$-nets as a special case of $\pci$-nets}

We are now also in position to see in what sense $\sci$-nets are a special case of $\pci$-nets.  Translated into the context of $\pci$-nets a $\sci$-statement $c = S^+,S^-:S_1 \vartriangleright S_2 $ can be written as the $\pci$-statement $\hat{c} := P^+:C$ where
\begin{align*}
P^+ &:= P^+_{1} \cup P^+_{2} \cup P^+_{3}, \\
P^+_{1} &:= \{ \ce{s^+}{1}  \mid s^+ \in S^+ \}, \\
P^+_{2} &:= \{ \ce{s}{0}  \mid s^- \in (S^- \cup S_1 \cup S_2)   \}, \\
P^+_{3} &:= \{ \cl{s}{1}  \mid s \in (O \setminus (S^+ \cup S^- \cup S_1 \cup S_2)) \}, 
\end{align*}
\noindent and 
\begin{align*}
C &:= \{\nrp{s_1}{1}  \mid s_1 \in S_1 \} \vartriangleright \{\nrp{s_2}{1}  \mid s_2 \in S_2 \}.  
\end{align*}
\noindent If, given a $\sci$-net $\cin$, one constructs a $\pci$-net $\widehat{\cin} := \{\hat{c} \mid c \in \cin \}$ it is relatively easy to see that the CI-flips w.r.t. $\cin$ and $\hat{\cin}$ are, by construction, exactly the same.  Hence one also has that $\cseqt{\widehat{\cin}}{O}$ captures $\cseqo{\widehat{\cin}}$ for $U = (O \times O)$.   

\begin{example} The $\sci$-net from Example \ref{ex:01} as a $\pci$-net (assuming for simplicity that $O = \{a,b,c,d\}$) is the following:
\begin{align*}
\ce{a}{1},\sce{b,c,d}{0} : \nrp{d}{1} \vartriangleright \nrp{b}{1},\nrp{c}{1} ;\\
\ce{a}{1},\sce{b,c,d}{0} : \nrp{b}{1} \vartriangleright \nrp{c}{1} ;\\
\cl{a}{1},\sce{b,c}{0},\ce{d}{1} : \nrp{b}{1} \vartriangleright \nrp{c}{1}.
\end{align*}
\end{example}

\section{Confined reasoning about $\pci$-nets}
\label{sec:mci}

In this Section we develop the fundamentals for computational procedures for confined reasoning for $\pci$-nets and thus, via 
corollaries \ref{cor:09} and \ref{cor:091}, also for non-confined reasoning about $\pci$-nets.  More concretely, assume one has a $\pci$ net $\cin$, an ``evaluation context'' $U \subseteq (\infm \times \infm)$, as well as some $M \in \infm$ s.t. $\cseqt{\cin}{M}$ captures $\cseqo{\cin}$ for $U$.  Alternatively, one is just interested in $\cseqt{\cin}{M}$ without knowing for certain whether $\cseqt{\cin}{M}$ captures $\cseqo{\cin}$ for $U$.  E.g. one has some heuristic for determining an $M \in \infm$ s.t.  $\cseqt{\cin}{M}$ captures $\cseqo{\cin}$ for $U$ or one could compute $M$ in an iterative fashion.  The results in this Section then provide a basis for sound (and depending on whether $\cseqt{\cin}{M}$ captures $\cseqo{\cin}$ for $U$, also complete)  procedures  for determining   $\cseqo{\cin}$ for $U$.

Specifically, in this Section we first introduce a generalisation of $\sci$-nets for specifying and reasoning about preferences over multi-sets where the multiplicites of the items are bounded by some number.  We call these $\mci$-nets.  We then show how to translate confined reasoning about $\pci$-nets to reasoning about $\mci$-nets.  We also finally give an efficient reduction of reasoning for $\mci$-nets to reasoning about $\sci$-nets.

\subsection{$\mci$-nets}
\label{sec::def2}

$\mci$-nets consist of, as to be expected by now, a set of $\mci$-statements.  These are defined in a manner that follows closely the definition of $\sci$-statements.  

\begin{definition}[$\mci$-statements] \label{def::03}
Let $O$ be a finite set and $M$ a finite multiset on $O$. A $\mci$ statement on $M$ is an expression of the form
\begin{align*}
M^+,M^- : M_1 \vartriangleright M_2
\end{align*}
\noindent where $M^+ \subseteq M$, $M^- \subseteq (M \setminus M^+$), $M_1, M_2 \subseteq (M \setminus (M^+ \cup M^-))$, $M_1 \neq \emptyset$, $M_2 \neq \emptyset$, and $(M_1 \cap M_2) = \emptyset$.
\end{definition}

\noindent The constraints on $M^+ , M^- , M_1 , M_2$ in Definition \ref{def::03} amount to $M^+ , M^- , M_1 , M_2$ being disjoint sets when $M$ is a set.  Hence the definition in this case amounts to Definition \ref{def::scistmnt}.  The semantics of $\mci$-statements also is a direct generalisation of that of $\sci$-statements, although now defined for preference relations over $2^M$.

\begin{definition}[Semantics of $\mci$-statements] A preference relation over $2^M$ \label{def:44}  satisfies a $\mci$-statement $M^+,M^- : M_1 \vartriangleright M_2$ if for every $M' \subseteq (M  \setminus (M^+ \cup M^- \cup M_1 \cup M_2))$, we have $(M' \cup M^+ \cup M_1) > (M' \cup M^+ \cup M_2)$.
\end{definition}

\noindent Note that $M^+ \subseteq M$, $M^- \subseteq (M \setminus M^+)$, $M_1, M_2 \subseteq (M \setminus (M^+ \cup M^-))$, and $(M_1 \cap M_2) = \emptyset$ imply that $(M^+ \cup M^- \cup M_1 \cup M_2) \subseteq M$ and, hence, $M'$ in Definition \ref{def:44} is well defined. 

Now the notions of a $\mci$-net, a preference relation satisfying a $\mci$-net, a $\mci$-net being satisfiable, as well as the induced preference relation for a $\mci$-net (which for a $\mci$-net $\cin$ we again denote $>_{\mathcal{N}}$), are defined analogously as for $\sci$ and $\pci$ nets.  Together with the already referred to identity of Definition \ref{def::03} and Definition \ref{def::scistmnt} when $M$ is a set, this means, in particular, that $\mci$-nets are indeed a generalisation of $\sci$-nets.  It is also easy to show that a $\mci$-net on $M$ can express all monotonic preference relations on $2^M$. Example \ref{ex:001m} shows the result of encoding confined reasoning on the $\pci$-net of Example \ref{ex:001} w.r.t. $M = \{(A,6),(B,6),(C,6)\}$ as a $\mci$-net on $M$.  We develop the details of this encoding in the following sub-section.

The operational semantics for $\mci$-nets is also defined in analogous fashion as for $\sci$ and $\pci$ nets, via the notion of a worsening flip (for $\mci$-nets).  

\begin{definition}[Worsening flips for $\mci$-nets]
Let $\mathcal{N}$ be a $\mci$-net on $M$ and let $M_a,M_b \subseteq M$. Then $M_a \flip M_b$ is called a \emph{worsening flip} w.r.t. $\mathcal{N}$ if one of the two following conditions is satisfied:
\begin{itemize}
\item $M_a \supset M_b$ ($\M$ flip)
\item There is a $\mci$-statement $M^+,M^-:M_1 \vartriangleright M_2 \in \mathcal{N}$ and an $M' \subseteq (M \setminus (M^+ \cup M^- \cup M_1 \cup M_2))$ s.t. $M_a = (M' \cup M^+ \cup M_1)$ and  $M_b = (M' \cup M^+ \cup M_2)$ (CI flip).
\end{itemize}
A more operational characterisation equivalent to the latter condition is that if $\overline{M} = (M \setminus (M^+ \cup M^- \cup M_1 \cup M_2))$, then 
\begin{itemize}
\item $(M \setminus (M^- \cup M_2)) \supseteq M_a \supseteq (M_1 \cup M^+)$, 
\item $(M \setminus (M^- \cup M_1)) \supseteq M_b \supseteq (M_2 \cup M^+)$, and
\item $(\overline{M} \cap M_a) = (\overline{M} \cap M_b)$.
\end{itemize}
\end{definition}

\noindent We again denote there existing a sequence of worsening flips from $M_a$ to $M_b$ w.r.t. $\mathcal{N}$ as $M_a \cseqo{\cin} M_b$.

\begin{propn} \label{seq:m} Let $\mathcal{N}$ be a satisfiable $\mci$-net on $M$, and $M_a, M_b \subseteq M$. We have $M_a >_{\mathcal{N}} M_b$ if and only if $M_a \cseqo{\cin} M_b$.
\end{propn}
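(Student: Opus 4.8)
The plan is to follow the by-now-standard argument for CP-nets and $\sci$-nets (Theorems 7 and 8 of \cite{BoutilierBDHP04}), which the paper already invokes for Proposition \ref{pro:06}, now carried out over the \emph{finite} lattice $2^M$ of sub-multisets of $M$. Because $M$ is a finite multiset, $2^M$ is finite and no infinite-descent issues arise; moreover Definition \ref{def::03} and Definition \ref{def:44} reduce to the $\sci$ definitions when $M$ is a set, so almost all of the bookkeeping transfers verbatim from Section \ref{sec::ci}. The proof splits into the two implications.

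For the soundness direction ($M_a \cseqo{\cin} M_b \Rightarrow M_a >_{\mathcal{N}} M_b$): I would fix an arbitrary preference relation $>$ satisfying $\mathcal{N}$ and a witnessing sequence $M_a = M_1, \ldots, M_n = M_b$ of worsening flips, and show by induction on $n$ that $M_a > M_b$. Each flip $M_i \flip M_{i+1}$ is either a $\M$ flip, whence $M_i > M_{i+1}$ because $>$ is monotonic on $2^M$, or a CI flip justified by some $\mci$-statement $M^+,M^- : M_1' \vartriangleright M_2' \in \mathcal{N}$, whence $M_i > M_{i+1}$ because $>$ satisfies that statement (Definition \ref{def:44}); transitivity of $>$ chains these into $M_a > M_b$. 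Since this holds for every preference relation satisfying $\mathcal{N}$, and $>_{\mathcal{N}}$ is their intersection (equivalently, the smallest such relation), we conclude $M_a >_{\mathcal{N}} M_b$.

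For the completeness direction ($M_a >_{\mathcal{N}} M_b \Rightarrow M_a \cseqo{\cin} M_b$): I would introduce the relation $\succ\ :=\ \{(M_x,M_y) \in 2^M \times 2^M \mid M_x \cseqo{\cin} M_y\}$ and verify that it is a monotonic preference relation satisfying $\mathcal{N}$. Transitivity is immediate by concatenating flip sequences; monotonicity holds because $M_x \M M_y$ is itself a worsening flip; and $\succ$ satisfies each $\mci$-statement $M^+,M^- : M_1' \vartriangleright M_2'$ since for every admissible $M'$ the pair $\bigl((M'\cup M^+\cup M_1'),\,(M'\cup M^+\cup M_2')\bigr)$ is a single CI flip and hence lies in $\succ$. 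The one step needing care is irreflexivity (and then asymmetry, again by concatenation): if $M_x \cseqo{\cin} M_x$ held for some $M_x \subseteq M$, then by the soundness argument above every preference relation satisfying $\mathcal{N}$ would contain $M_x > M_x$, contradicting irreflexivity; since $\mathcal{N}$ is satisfiable such a relation exists, so no worsening-flip cycle occurs. Having established that $\succ$ is a monotonic preference relation satisfying $\mathcal{N}$, minimality of $>_{\mathcal{N}}$ gives $>_{\mathcal{N}}\ \subseteq\ \succ$, which is exactly the claim.

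The main obstacle, modest as it is, is precisely this irreflexivity/acyclicity step, as it is the only place the hypothesis that $\mathcal{N}$ is \emph{satisfiable} is used; it is the $\mci$-analogue of the fact recalled in Section \ref{sec::ci} that a $\sci$-net is satisfiable iff it admits no cycle of worsening flips, and it is handled by the soundness direction rather than by any new combinatorial insight. Everything else is a routine transfer of the $\sci$-net proof, using only that $2^M$ is a finite lattice ordered by $\M$.
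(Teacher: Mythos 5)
Your proof is correct and matches the paper's (implicit) approach: the paper states this proposition without spelling out a proof, appealing — as it does explicitly for Proposition \ref{pro:06} — to the standard equivalence argument for CP-nets (Theorems 7 and 8 of \cite{BoutilierBDHP04}) and $\sci$-nets. Your soundness/completeness decomposition, with the flip-reachability relation shown to be a monotonic preference relation satisfying $\mathcal{N}$ and irreflexivity obtained from satisfiability, is precisely that argument transferred to the finite setting of $2^M$.
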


\subsection{Translating confined reasoning about $\pci$-nets to reasoning about $\mci$-nets}

Let $\cin$ be a $\pci$ net, $M \in \infm$.  We will in the following reduce confined reasoning w.r.t. $\cin$ and a $M \in \infm$ to reasoning about a $\mci$-net on $M$. Our reduction will be in two steps.  First we translate the $\pci$-net $\cin$ into a $\pci$-net $\cin'$ s.t. the CI flips $M_a \flip M_b$ w.r.t. $\cin'$ are exactly those CI flips $M_a \flip M_b$ w.r.t. $\cin$ s.t. $M_a,M_b \subseteq M$.
We will say that we are ``reinterpreting'' $\cin$ w.r.t. $M$.  In a second step it will then be easy to convert the $\pci$-net $\cin'$ into a $\mci$-net $\cin_M$ s.t. $M_a \cseqt{\cin'}{M} M_b$ iff $M_a \cseqo{\cin_M} M_b$. We will be informal in this section, since the translation is quite straightforward.

The basis of reinterpreting a $\pci$-net $\cin$ w.r.t. $M$ is to rewrite each $\pci$ statement $c = P^+:P_1 \vartriangleright P_2$  in $\cin$ into a $\pci$ statement $c' =  P^{+'}:P_1 \vartriangleright P_2$ s.t. $M' \in P^{+'}$ iff $M' \in P^+$, $(M' \cup M_{P_1}) \subseteq M$, and $(M' \cup M_{P_2}) \subseteq M$. We will say that $c$ is \emph{ meaningful w.r.t.} $M$ if such a $M'$ exists and call any such $M'$ a multi-set that \emph{makes $c$ meaningful w.r.t. $M$}.  Also, we will call such a $\pci$ statement $c'$ \emph{equivalent to $c$ for $M$}.   

So assume we have a $\pci$ statement $c = P^+ : P_1 \vartriangleright P_2$ that is meaningful w.r.t. $M$.  Then, since $c$ is also satisfiable note that the precondition and comparison expressions can be written in the form
\begin{align*}
P^+ &= \{\cg{o^+_i}{a_i^+}\}_{1 \leq i \leq p} \cup \{\cl{o^-_j}{a_j^-}\}_{1 \leq j \leq q}, \\
P_1 &= \{\nrp{o_k^1}{a_k^1}\}_{1 \leq k \leq r}, \\
P_2 &= \{\nrp{o_l^2}{a_l^2}\}_{1 \leq l \leq s} 
\end{align*}
\noindent where each $o \in O$ appears at most once in a sub-expression of the form $\cg{o^+_i}{a_i^+}$ and at most once in a sub-expression of the form $\cl{o^-_j}{a_j^-}$ in the precondition.

Let $O^*$ be defined as the set
\begin{align*}
\{o^+_i\}_{1 \leq i \leq p} \cup \{o^-_j\}_{1 \leq j \leq q} \cup \{o^1_k\}_{1 \leq k \leq r} \cup \{o^2_l\}_{1 \leq l \leq s}  
\end{align*}
\noindent We re-label the objects in $O^* \subseteq O$ to $\{o_1,\ldots,o_m\}$ ($m = p + q +r+s$). We can now rewrite the precondition and comparison expressions of $c$ as (we abuse the notation by using the same names as before)
\begin{align*}
P^+ &= \{\cg{o_h}{A_h^+}\}_{1 \leq h \leq m, A_h^+ > 0} \cup \{\cl{o_h}{A_h^-}\}_{1 \leq h \leq m},\\
P_1 &= \{\nrp{o_h}{A_h^1}\}_{1 \leq h \leq m,  A_h^1 > 0}, \\
P_2 &= \{\nrp{o_h}{A_h^2}\}_{1 \leq h \leq m ,  A_h^2 > 0} 
\end{align*}
\noindent where for each $1 \leq h \leq m$ we define
\begin{itemize}
\item  $A_h^+ := a_i^+$ if there is a $i \in  \{1,\ldots,p\}$ s.t. $o_h = o^+_i$, $A_h^+ := 0$ otherwise;
\item $A_h^- := a_j^-$ if there is a $j \in  \{1,\ldots,q\}$ s.t. $o_h = o^-_j$, $A_h^- := m_M(o_h)$ otherwise;
\item $A_h^1 := a_k^1$ if there is a $k \in  \{1,\ldots,r\}$ s.t. $o_h = o^1_k$, $A_h^1 := 0$ otherwise;
\item $A_h^2 := a_l^1$ if there is a $l \in  \{1,\ldots,s\}$ s.t. $o_h = o^2_l$, $A_h^2 := 0$ otherwise.
\end{itemize}
\noindent It should be clear that this rewriting produces a $\pci$ statement which has the same multisets which make it meaningful w.r.t. $M$ as the original $\pci$-statement.  We further simplify the precondition $P^+$ to 
\begin{align*}
\{\cg{o_h}{A_h^+}\}_{1 \leq h \leq m, A_h^+ > 0} \cup \{\cl{o_h}{B_h^-}\}_{1 \leq h \leq m}
\end{align*}
\noindent where $B_h^-$ is defined as
\begin{align*}
 max \{ I \mid A_h^+ \leq I \leq A_h^- \textit{ and } I + A_h^1 + A_h^2 \leq m_M(o_h)\}.
\end{align*}
\noindent It should still be relatively straightforward to see that this rewriting produces a $\pci$ statement $c'$ that is equivalent to the original $\pci$ statement $c$ for $M$. As a consequence, we have that for the $\pci$ net $\cin'$ obtained from $\cin$ by translating every $c \in \cin$ that is meaningful for $M$ in the manner just described, it holds that $M_a \cseqt{\cin'}{M} M_b$ iff $M_a \cseqt{\cin}{M} M_b$.

Now, for the second step of our reduction of confined reasoning for $\pci$-nets to reasoning on $\mci$-nets, consider again a $\pci$ statement $c' \in \cin'$ obtained from a $c \in \cin$ as described above. We now define 
\begin{align*}
M^+ &:= \{(o_h,A_h^+) \mid 1 \leq h \leq m, A_h^+ > 0\},\\
X_{o_h} &:= m_M(o_h)-B_h^--A_h^1-A_h^2, \\
M^- &:= \{(o_h,X_{o_h}) \mid 1 \leq h \leq m, X_{o_h} > 0)\}, \\
M^1 &:= \{(o_h,A_h^1) \mid 1 \leq h \leq m, A_h^1 > 0\},\\
M^2 &:= \{(o_h,A_h^2) \mid 1 \leq h \leq m, A_h^2 > 0\}.  
\end{align*}
Its then relatively straightforward to see, first of all, that $c'' = M^+,M^-: M_1 \vartriangleright M_2$ is a $\mci$ statement. Consider now the $\mci$ net $\cin_M$ consisting of such $\mci$ statements $c''$ for each $c' \in \cin'$.  We have that $M_a \flip M_b$ is a CI flip w.r.t. $c'$ in $\cin'$ iff it is a CI flip w.r.t. $c'' \in \cin_{M}$.  As a consequence, $M_a \cseqt{\cin'}{M} M_b$ iff $M_a \cseqo{\cin_M} M_b$; hence, also $M_a \cseqt{\cin}{M} M_b$ iff $M_a \cseqo{\cin_M} M_b$. 

\begin{example}\label{ex:001m}
The reinterpretation of the $\pci$ net from Example \ref{ex:001} w.r.t. $M = \{(A,6),(B,6),(C,6)\}$ is
\begin{align*}
\cl{a}{5},\scl{b,c,d}{0} : \nrp{a}{1} \vartriangleright \snrp{b,c,d}{6};\\
\cg{a}{1},\cl{b}{5},\scl{c,d}{3} : \nrp{b}{1} \vartriangleright \snrp{c,d}{3}; \\
\cg{a}{3},\cl{b}{2},\scl{c,d}{3} : \nrp{c}{1} \vartriangleright \nrp{d}{3}.
\end{align*}
\noindent The translation to a $\mci$-net is then
\begin{align*}
\{(a,1)\} \vartriangleright \{(b,6),(c,6),(d,6)\};\\
\{(a,1)\},\emptyset : \{(b,1)\} \vartriangleright \{(c,3),(d,3)\}; \\
\{(a,3)\},\{(b,4)\}:\{(c,3)\} \vartriangleright \{(d,3)\}.
\end{align*}
\end{example}

\subsection{Reduction of $\mci$-nets to $\sci$-nets}
\label{sec::reduc}

We now reduce preferences stated via $\mci$-nets to preferences stated using $\sci$-nets. More precisely, given a multiset $M$ and a $\mci$-net  $\mathcal{N}_M$ on $M$ we will define a $\sci$-net $\mathcal{N}_{S_M}$ for a set $S_M$ and a mapping of every $M' \subseteq M$ to a $\overline{M'} \subseteq S_M$ s.t.  $\mathcal{N}_M$ is satisfiable iff $\cin_{S_M}$ is satisfiable, and assuming $\cin_M$ is satisfiable, $M_a <_{\mathcal{N}_M} M_b$ iff $\overline{M_a} <_{\mathcal{N}_{S_M}} \overline{M_b}$.


%
We start by introducing some notation.  Given some $o \in O$ and $i,j$ s.t. $i,j \geq 1$ we define the \emph{forward-generated set of j indexed copies from i} of $o$ as
\begin{align*}
[o]_{i,j}^F := \{o_i,o_{i+1},\ldots,o_{i+(j-1)}\} 
\end{align*}   
\noindent and the \emph{backward-generated set of j indexed copies from i} of $o$ as
\begin{align*}
[o]_{i,j}^B := \{o_i,o_{i-1},\ldots,o_{i-(j-1)}\}. 
\end{align*}
\noindent If $j = 0$, we define
\begin{align*}
[o]_{i,j}^F = [o]_{i,j}^B := \emptyset.  
\end{align*}
\noindent Then 
\begin{align*}
S_M := \bigcup \{[o]_{1,m_M(o)}^F \mid o \in O \}
\end{align*}
\noindent We call $[o]_{1,m_M(o)}^F = [o]_{m_M(o),m_M(o)}^B$ for $o \in O$ the \emph{set of indexed copies of $o$ in $S_M$}.

For some $M' \subseteq M$, $ss_{S_M}(M')$ includes all sets which, for each $o \in O$, have the same \emph{number} of elements from the set of indexed copies of $o$ in $S_M$ as instances of $o$ there are in $M'$.
Formally, we define $ss_{S_M}(M')$ to be the set
\begin{align*}
\{S \subseteq S_M \mid |S \cap [o]_{1,m_M(o)}^F| = m_{M'}(o) \textit{ for every } o \in O\}.
\end{align*} 
\noindent Clearly, in particular $ss_{S_M}(M) = \set{S_M}$.

We will also (partially) order the sets in $S_M$ via the order $>_i$ defined as the transitive closure of the binary relation
\begin{align*}
\{(S_1, S_2) \mid (S_1 &\cup S_2) \setminus (S_1 \cap S_2) =\{o_i,o_j\} \textit{ s.t. }\\ 
&o \in O, o_i \in S_1, o_j \in S_2, \textit{and } j = i+1\}.  
\end{align*}
\noindent Crucial for our purposes is that there is a unique maximal element
\begin{align*}
max_{>_i}(ss_{S_M}(M'))=\bigcup \{[o]_{1,m_{M'}(o)}^F \mid o \in O'\}
\end{align*}
\noindent w.r.t. $>_i$ within $ss_{S_M}(M')$ for every $M' \subseteq M$\footnote{There is also a unique minimal element
\begin{align*}
\bigcup \{[o]_{m_M(o),m_{M'}(o)}^B \mid o \in O'\}
\end{align*}
w.r.t. $>_i$ within $ss_{S_M}(M')$ for every $M' \subseteq M$.
}. We denote this maximal element as $\overline{M'}$.  Note that in particular $\overline{M} = S_M$.

Next we proceed to define for a $\mci$-statement $c \in \mathcal{N}_M$ the corresponding $\sci$ statement $\overline{c} \in \mathcal{N}_{S_M}$.  Assume $c$ is of the form $M^+,M^- : M_1 \vartriangleright M_2$. 
Then 
\begin{align*}
\overline{c} := \widehat{M^+},\widehat{M^-} : \widehat{M_1} \vartriangleright \widehat{M_2}
\end{align*}
\noindent where 
\begin{align*}
 \widehat{M^+} &:= \bigcup \{[o]_{1,m^+(o)}^F \mid o \in O \} \\
\widehat{M^-} &:=  \bigcup \{[o]_{m_M(o),m^-(o)}^B \mid o \in O \} \\
\widehat{M_1} &:= \bigcup \{[o]_{m_M(o) - m^-(o),m_1(o)}^B \mid o \in O \}  \\
\widehat{M_2} &:= \bigcup \{[o]_{m^+(o)+1,m_2(o)}^F \mid o \in O \}.
\end{align*}
\noindent We denote the set of $\sci$ statements corresponding to the $c \in \mathcal{N}_M$ as $\overline{cs}$.  Apart from having a statement $\overline{c}$ as defined above for each $c \in \mathcal{N}_M$, $\mathcal{N}_{S_M}$ also contains the set of $\sci$-statements
\begin{align*}
cs_i := \{ o_i \vartriangleright o_j \mid o \in O, 1 \leq i < m_M(o), j = i+1 \}
\end{align*} 
Now Corollaries \ref{corra} and \ref{corrb}
basically follows from Proposition \ref{prop:red}  which, in turn, follows mainly from lemmas \ref{lem:red1} and \ref{lem:red2}.

\begin{lemma} \label{lem:red1} If $M_a \flip M_b$  is a CI-flip w.r.t $c \in \mathcal{N}_M$ then there is a $S_{M_a} \in ss_{S_M}(M_a)$ s.t. $S_{M_a} \flip \overline{M_b}$ is a CI-flip w.r.t. $\overline{c} \in \mathcal{N}_{S_M}$.  Also, if there are $S_{M_a} \in ss_{S_M}(M_a)$, $S_{M_b} \in ss_{S_M}(M_b)$ s.t. $S_{M_a} \flip S_{M_b}$ is a CI-flip w.r.t. $\overline{c} \in \mathcal{N}_{S_M}$, then $M_a \flip M_b$  is a CI-flip w.r.t $c \in \mathcal{N}_M$.
\end{lemma}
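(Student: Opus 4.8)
The plan is to establish both halves of the lemma by unfolding the operational form of a CI-flip (Definition~\ref{def:opsci}) on the two sides and matching the \emph{number} of indexed copies of an object contained in a subset of $S_M$ with the \emph{multiplicity} of that object in the corresponding sub-multiset of $M$. Write $c = M^+,M^- : M_1 \vartriangleright M_2$, and for $o \in O$ put $N_o := m_M(o)$ and let $m^+(o),m^-(o),m_1(o),m_2(o)$ be the multiplicities of $o$ in $M^+,M^-,M_1,M_2$, so that $m^+(o)+m^-(o)+m_1(o)+m_2(o) \le N_o$. The first thing I would record is the geometric picture inside the block $[o]_{1,N_o}^F = \{o_1,\ldots,o_{N_o}\}$ of indexed copies of $o$: the four sets occupy nested layers, $\widehat{M^+}$ the bottom $m^+(o)$ copies, $\widehat{M_2}$ the next $m_2(o)$, $\widehat{M^-}$ the top $m^-(o)$, and $\widehat{M_1}$ the $m_1(o)$ copies directly below $\widehat{M^-}$, leaving a ``free'' middle region $\mathrm{Fr}_o$ of size $N_o - m^+(o)-m^-(o)-m_1(o)-m_2(o) \ge 0$. (The pairwise disjointness and the non-emptiness of $\widehat{M_1},\widehat{M_2}$ that this gives are also what make $\overline{c}$ a legitimate $\sci$-statement.) It then follows, reading Definition~\ref{def:opsci} for $\overline{c}$, that $S_a \flip S_b$ is a CI-flip w.r.t.\ $\overline{c}$ exactly when, in every block $o$, $S_a \supseteq \widehat{M^+}\cup\widehat{M_1}$ is disjoint from $\widehat{M_2}\cup\widehat{M^-}$, $S_b \supseteq \widehat{M^+}\cup\widehat{M_2}$ is disjoint from $\widehat{M_1}\cup\widehat{M^-}$, and $S_a$ and $S_b$ meet $\mathrm{Fr}_o$ in the same subset.

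For the forward direction I would start from a CI-flip $M_a \flip M_b$ w.r.t.\ $c$, so $M_a = M'\cup M^+\cup M_1$ and $M_b = M'\cup M^+\cup M_2$ with $m_{M'}(o) \le |\mathrm{Fr}_o|$ for all $o$, and take $S_{M_b} := \overline{M_b}$, i.e.\ the set $\bigcup_{o}[o]_{1,m_{M_b}(o)}^F$ of lowest-indexed copies. The step to verify is that this $S_{M_b}$ is admissible as the target of a $\overline{c}$-flip: since $m_{M_b}(o) = m^+(o)+m_2(o)+m_{M'}(o)$ and $m_{M'}(o) \le |\mathrm{Fr}_o|$, its block $o$ consists of exactly $\widehat{M^+}$, then $\widehat{M_2}$, then the first $m_{M'}(o)$ elements of $\mathrm{Fr}_o$, and never reaches $\widehat{M_1}$ or $\widehat{M^-}$. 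I would then let $S_{M_a}$ be the set whose block $o$ is $\widehat{M^+}$ together with those same first $m_{M'}(o)$ elements of $\mathrm{Fr}_o$ together with $\widehat{M_1}$; counting, $|S_{M_a}\cap[o]_{1,N_o}^F| = m^+(o)+m_{M'}(o)+m_1(o) = m_{M_a}(o)$, so $S_{M_a} \in ss_{S_M}(M_a)$, and $S_{M_a}$ agrees with $\overline{M_b}$ on every $\mathrm{Fr}_o$ by construction, so $S_{M_a} \flip \overline{M_b}$ is a CI-flip w.r.t.\ $\overline{c}$.

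For the converse I would run the same correspondence in reverse: given $S_{M_a} \in ss_{S_M}(M_a)$ and $S_{M_b} \in ss_{S_M}(M_b)$ with $S_{M_a} \flip S_{M_b}$ a CI-flip w.r.t.\ $\overline{c}$, the unravelled form above forces $S_{M_a}$ and $S_{M_b}$ to meet $\mathrm{Fr}_o$ in a common subset, of size $t_o$ say, whence $|S_{M_a}\cap[o]_{1,N_o}^F| = m^+(o)+m_1(o)+t_o$ and $|S_{M_b}\cap[o]_{1,N_o}^F| = m^+(o)+m_2(o)+t_o$; membership in $ss_{S_M}(M_a)$ and $ss_{S_M}(M_b)$ then gives $m_{M_a}(o) = m^+(o)+m_1(o)+t_o$ and $m_{M_b}(o) = m^+(o)+m_2(o)+t_o$. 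Defining $M'$ by $m_{M'}(o) := t_o$ -- which is $\le |\mathrm{Fr}_o|$ because $t_o$ counts a subset of $\mathrm{Fr}_o$ -- yields a sub-multiset with $M_a = M'\cup M^+\cup M_1$ and $M_b = M'\cup M^+\cup M_2$, i.e.\ a CI-flip w.r.t.\ $c$.

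I expect the only real obstacle to be the verification, in the forward direction, that the \emph{maximal} representative $\overline{M_b}$ of $ss_{S_M}(M_b)$ is usable as the target of a $\overline{c}$-flip; this is precisely where the nested layering of $\widehat{M^+},\widehat{M_2},\widehat{M_1},\widehat{M^-}$ and the inequality $m^+(o)+m^-(o)+m_1(o)+m_2(o) \le N_o$ are used. The remaining content is routine arithmetic on block sizes and careful matching of the CI-flip conditions, and the auxiliary statements $cs_i$ play no role here (they are needed only to relate different representatives within $ss_{S_M}(M')$).
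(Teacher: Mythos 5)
Your proof is correct and follows essentially the same route as the paper's: you unfold the CI-flip conditions for $\overline{c}$ using the layered layout of $\widehat{M^+},\widehat{M_2},\widehat{M_1},\widehat{M^-}$ within each block of indexed copies, take $S'$ to be the initial segment of the free region of size $m_{M'}(o)$ so that $S'\cup\widehat{M^+}\cup\widehat{M_2}=\overline{M_b}$ and $S_{M_a}:=S'\cup\widehat{M^+}\cup\widehat{M_1}\in ss_{S_M}(M_a)$, and for the converse recover $M'$ by counting the common free part per block. This matches the paper's argument (including the observation that the $cs_i$ statements are not needed here), just written out in more explicit detail.
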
   

\begin{proof} (sketch) Let $c = M^+,M^-:M_1 \vartriangleright M_2$.  The lemma follows from the fact that the set of all CI-flips ``induced'' by $\overline{c} \in \mathcal{N}_{S_M}$ are of the form 
\begin{align*}
(S' \cup \widehat{M^+} \cup \widehat{M_1}) \flip (S' \cup \widehat{M^+} \cup \widehat{M_2})
\end{align*}
\noindent with $\widehat{M^+}, \widehat{M_1}, \widehat{M^+}, \widehat{M_2}$ defined as above, i.e 
\begin{align*}
\widehat{M^+} &= \bigcup \{[o]_{1,m^+(o)}^F \mid o \in O^+ \} \\
\widehat{M_1} &= \bigcup \{[o]_{m_M(o) - m^-(o),m_1(o)}^B \mid o \in O_1 \} \\
\widehat{M^+} \cup \widehat{M_2} &= \bigcup \{[o]_{1, m^+(o)+m_2(o)}^F \mid o \in O^+ \cup O_2  \},
 \end{align*}
\noindent and
\begin{align*}
S' &\subseteq \bigcup \{[o]_{X_o,Y_o}^F \mid o \in O  \}
\end{align*}
\noindent where 
\begin{align*}
X_o &= m^+(o)+m_2(o)+1 \\
Y_o &= m_{M}(o) - (m^+(o) + m^-(o)+m_1(o) + m_2(o)).
\end{align*}
\noindent for each $o \in O$.

Now, let $M_a \flip M_b$  be a CI-flip w.r.t $c \in \mathcal{N}_M$, i.e. there is a $M' \subseteq (M  \setminus (M^+ \cup M^- \cup M_1 \cup M_2))$, s.t. $M_a = (M' \cup M^+ \cup M_1)$ and $M_b =  (M' \cup M^+ \cup M_2)$.  Then $m_{M'}(o) \leq (m_{M}(o) - (m^+(o) + m^-(o)+m_1(o) + m_2(o)))$ for every $o \in O$ and it is relatively straightforward to see that for 
\begin{align*}
S' = \bigcup \{[o]_{X_o,m_{M'}(o)}^F \mid o \in O  \}
\end{align*}
\noindent  it is the case that $(S' \cup \widehat{M^+} \cup \widehat{M_2}) = \overline{M_b}$.  Also, it is straight-forward to verify that $S_{M_a} \in ss_{S_M}(M_a)$  
for $S_{M_a} := S' \cup \widehat{M^+} \cup \widehat{M_1}$.

On the other hand, assume now that there are $S_{M_a} \in ss_{S_M}(M_a)$, $S_{M_b} \in ss_{S_M}(M_b)$ s.t. $S_{M_a} \flip S_{M_b}$ is a CI-flip w.r.t. $\overline{c} \in \mathcal{N}_{S_M}$.  This means that $S_{M_a} = (S' \cup \widehat{M^+} \cup \widehat{M_1})$ and $S_{M_b} = (S' \cup \widehat{M^+} \cup \widehat{M_1})$ for some $S'$ as indicated above.  Now consider $M'$ s.t. $m_{M'}(o) = |S' \cap [o]_{1,m_M(o)}^F|$.  Then it is straightforward to see that $M_a = (M' \cup M^+ \cup M_1)$, $M_b = (M' \cup M^+ \cup M_2)$ and $M_a \flip M_b$ is a CI-flip w.r.t $c \in \mathcal{N}_M$.
\end{proof}

\begin{lemma} \label{lem:red2} If $S >_{i} S'$ then there is a sequence of $cs_i$ flips from $S$ to $S'$  w.r.t. $\mathcal{N}_{S_M}$.
\end{lemma}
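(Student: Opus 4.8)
The plan is to unwind the transitive closure that defines $>_i$ and then recognise each of its single steps as a CI-flip of $\mathcal{N}_{S_M}$ that is justified by a statement in $cs_i$; concatenating these CI-flips yields the required sequence. So first I would invoke the definition of $>_i$: if $S >_i S'$, there is a chain $S = T_0, T_1, \ldots, T_k = S'$ of subsets of $S_M$ in which, for every $0 \le l < k$, the pair $(T_l, T_{l+1})$ lies in the base relation whose transitive closure is $>_i$; that is, there are $o \in O$ and $i \ge 1$ with $o_i \in T_l$, $o_{i+1} \in T_{l+1}$ and $(T_l \cup T_{l+1}) \setminus (T_l \cap T_{l+1}) = \{o_i, o_{i+1}\}$.

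Next I would analyse one such step in isolation. Putting $S^{*} := T_l \cap T_{l+1}$, the symmetric-difference condition forces $o_i \notin T_{l+1}$ and $o_{i+1} \notin T_l$, hence $o_i, o_{i+1} \notin S^{*}$, and a short check then gives $T_l = S^{*} \cup \{o_i\}$ and $T_{l+1} = S^{*} \cup \{o_{i+1}\}$ with $S^{*} \subseteq S_M \setminus \{o_i, o_{i+1}\}$. Comparing this with Definition \ref{def:opsci} (instantiated for the ground set $S_M$ and the $\sci$-statement $\emptyset, \emptyset : \{o_i\} \vartriangleright \{o_{i+1}\}$, which is what the entry $o_i \vartriangleright o_{i+1}$ of $cs_i$ abbreviates), we see that $T_l \flip T_{l+1}$ is exactly a CI-flip with auxiliary set $S^{*}$. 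It then remains to confirm that $o_i \vartriangleright o_{i+1}$ really is a member of $cs_i$: since $o_i, o_{i+1}$ belong to $S_M$ they are indexed copies of $o$, so $1 \le i$ and $i+1 \le m_M(o)$, i.e. $1 \le i < m_M(o)$, which is precisely the index range in the definition of $cs_i$.

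Finally, concatenation: the flips $T_0 \flip T_1 \flip \cdots \flip T_k$ form a sequence of worsening flips from $S$ to $S'$, each justified by a $\sci$-statement in $cs_i$, hence a sequence of $cs_i$ flips, which is the claim. I do not anticipate a genuine obstacle; the only delicate points are bookkeeping ones: keeping the orientation straight (in the base relation the lower index $o_i$ sits on the $S_1$-side, and in a worsening CI-flip the dominating set is the one containing the compared set $S_1 = \{o_i\}$, so the two directions agree), and noting that no $T_l$ in the chain escapes $2^{S_M}$, which is automatic once $>_i$ is read as a relation on subsets of $S_M$.
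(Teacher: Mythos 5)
Your proof is correct and follows essentially the same route as the paper, which only sketches the argument as following from the definitions of $>_i$, $cs_i$, and CI flips (Definition \ref{def:opsci}); you simply fill in those details by unwinding the transitive closure and matching each base-relation step with a CI flip justified by the precondition-free statement $o_i \vartriangleright o_{i+1}$. The orientation and index-range checks you include are exactly the bookkeeping the paper's sketch leaves implicit.
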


\begin{proof}(sketch) This lemma follows from the definition of $>_i$, $cs_i$, and the definition of CI flips for $\sci$ statements (Definition \ref{def:opsci}). 
\end{proof}

\begin{propn} \label{prop:red} Let $M_a,M_b \subseteq M$.  If $M_a \cseqo{\cin_M} M_b$, then $\overline{{M_a}} \cseqo{\cin_{S_M}} \overline{{M_b}}$.  Also, if  $S_{M_a} \cseqo{\cin_{S_M}} S_{M_b}$ for some $S_{M_a} \in ss_{S_M}(M_a)$, $S_{M_b} \in ss_{S_M}(M_b)$ then $M_a \cseqo{\cin_M} M_b$. 
\end{propn}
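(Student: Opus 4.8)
The plan is to prove the two directions separately, in each case lifting a single worsening flip in one net to a (possibly longer) sequence of worsening flips in the other, and then chaining these lifts along the given sequence. The two ingredients already available are Lemma \ref{lem:red1}, which handles CI flips and their correspondence between the $\mci$-net $\cin_M$ and the $\sci$-net $\cin_{S_M}$, and Lemma \ref{lem:red2}, which says that whenever $S >_i S'$ there is a sequence of $cs_i$-flips (hence worsening flips, since these are CI flips) from $S$ to $S'$ in $\cin_{S_M}$. I would also note at the outset the auxiliary fact, needed repeatedly, that $\M$-flips correspond across the reduction: if $M_a \supset M_b$ in $\cin_M$ then $\overline{M_a} \supseteq_i$-dominates every element of $ss_{S_M}(M_b)$ in a way that can be realised by $\M$- and $cs_i$-flips, and conversely a containment $S_{M_a} \supset S_{M_b}$ between representatives projects down to $M_a \supset M_b$ (because $|S \cap [o]_{1,m_M(o)}^F|$ is monotone under $\subseteq$).

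For the forward direction, suppose $M_a = M_1' , \ldots, M_n' = M_b$ is a sequence of worsening flips w.r.t. $\cin_M$. I would show by induction on $n$ that $\overline{M_a} \cseqo{\cin_{S_M}} \overline{M_b}$. For the inductive step it suffices to treat a single flip $M_i' \flip M_{i+1}'$ and produce a sequence of worsening flips in $\cin_{S_M}$ from $\overline{M_i'}$ to $\overline{M_{i+1}'}$; concatenation then finishes the argument. If the flip is a $\M$-flip, use the auxiliary fact above together with Lemma \ref{lem:red2} to get from $\overline{M_i'}$ to $\overline{M_{i+1}'}$ via $\M$- and $cs_i$-flips. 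If it is a CI flip w.r.t. some $c \in \cin_M$, Lemma \ref{lem:red1} gives a representative $S \in ss_{S_M}(M_i')$ with $S \flip \overline{M_{i+1}'}$ a CI flip w.r.t. $\overline{c}$; then, since $\overline{M_i'} = max_{>_i}(ss_{S_M}(M_i'))$, we have $\overline{M_i'} \geq_i S$, so Lemma \ref{lem:red2} supplies a sequence of $cs_i$-flips from $\overline{M_i'}$ down to $S$, and appending the single CI flip $S \flip \overline{M_{i+1}'}$ completes the lifted sequence.

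For the converse direction, suppose $S_{M_a} = T_1, \ldots, T_n = S_{M_b}$ is a sequence of worsening flips w.r.t. $\cin_{S_M}$ with $T_1 \in ss_{S_M}(M_a)$ and $T_n \in ss_{S_M}(M_b)$. Define the projection $\pi(T)$ to be the multiset $M'$ with $m_{M'}(o) = |T \cap [o]_{1,m_M(o)}^F|$, so $\pi(T_1) = M_a$, $\pi(T_n) = M_b$, and $T_k \in ss_{S_M}(\pi(T_k))$ for each $k$. I claim each flip $T_k \flip T_{k+1}$ projects to either equality $\pi(T_k) = \pi(T_{k+1})$ or a worsening flip $\pi(T_k) \flip \pi(T_{k+1})$ w.r.t. $\cin_M$: a $cs_i$-flip $o_i \vartriangleright o_{i+1}$ leaves all the cardinalities $|T \cap [o]_{1,m_M(o)}^F|$ unchanged, hence projects to equality; a $\M$-flip projects to a $\M$-flip or to equality (by monotonicity of cardinality, and noting it cannot strictly increase any count); and a CI flip w.r.t. some $\overline{c}$ projects, by the second half of Lemma \ref{lem:red1}, to a CI flip w.r.t. $c \in \cin_M$. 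Deleting the trivial (equality) steps from the projected sequence leaves a genuine sequence of worsening flips from $M_a$ to $M_b$ in $\cin_M$, i.e. $M_a \cseqo{\cin_M} M_b$.

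The main obstacle I anticipate is the careful bookkeeping in the converse direction — in particular verifying that a $\M$-flip in $\cin_{S_M}$ never projects to a "reverse" move and that, after projecting, the resulting sequence is still a legitimate worsening-flip sequence in $\cin_M$ (one must check that each projected CI step really is sanctioned by the corresponding $\mci$-statement, which is exactly what the second clause of Lemma \ref{lem:red1} delivers, but one has to confirm that the intermediate representatives $T_k$ indeed lie in $ss_{S_M}(\pi(T_k))$ so that the lemma applies at each step). The forward direction's only subtlety is making sure the "descend to the right representative via $cs_i$-flips" move (Lemma \ref{lem:red2}) can always be placed before the CI flip, which holds because $\overline{M_i'}$ is the $>_i$-maximum of its $ss_{S_M}$-class.
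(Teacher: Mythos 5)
Your proposal is correct and matches the paper's argument in all essentials: both directions rest on Lemma \ref{lem:red1} for the CI-flip correspondence, Lemma \ref{lem:red2} together with the $>_i$-maximality of $\overline{M'}$ to move between representatives of the same $ss_{S_M}$-class, and the observation that $\M$ and $cs_i$ flips transfer to inclusions or leave the projected multiset unchanged. The only difference is organizational — you lift flip-by-flip (and, for the converse, project via an explicit map $\pi$ and delete trivial steps), whereas the paper inducts on the number of CI ($\overline{cs}$) flips and treats the last such flip together with a trailing segment of $\M$ flips — which amounts to the same argument.
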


\begin{proof} 

We start by proving by induction on $k \geq 0$, that if there exists a sequence $M_a,\ldots,M_b$ w.r.t. $\mathcal{N}_M$ with $k$ CI flips, then there is a sequence $\overline{M_a},\ldots,\overline{M_b}$ w.r.t. $\mathcal{N}_{S_M}$ with $k$ $\overline{cs}$ flips.  The base case ($k = 0$) follows from the fact that if $M_a \supset M_b$ then $\overline{M_a} \supset \overline{M_b}$ and hence there is a sequence $\overline{M_a},\ldots,\overline{M_b}$ consisting only of $\M$ flips w.r.t. $\cin_{S_M}$.

For the inductive case assume that there exists a sequence $M_a,\ldots, M_b$ w.r.t $\cin_M$ with $k+1 \geq 1$ CI flips.  Consider the last $M_c,M_d$ in the sequence s.t. $M_c \flip M_d$ is a CI flip.  By inductive hypothesis then there is a sequence of flips $\overline{M_a},\ldots,\overline{M_c}$ w.r.t. $\cin_{S_M}$ with $k$ $\overline{cs}$ flips.  By Lemma \ref{lem:red1} there exists a $S_{M_c} \in ss_{S_M}(M_c)$ s.t. $S_{M_c} \flip \overline{M_d}$ is a CI-flip w.r.t. $\overline{c} \in \mathcal{N}_{S_M}$.  Hence $\overline{M_a},\ldots,\overline{M_c},\ldots,S_{M_c},\overline{M_d},\ldots,\overline{M_b}$ is a sequence w.r.t. $\cin_{S_M}$ with $k+1$ $\overline{cs}$ flips. Here $\overline{M_c},\ldots,S_{M_c}$ is a sequence of (possibly 0) $csi$ flips (that such a sequence exists follows from Lemma \ref{lem:red2}) and $\overline{M_d},\ldots,\overline{M_b}$ is a sequence of (possibly 0) $\M$ flips.

We now prove by induction on $k \geq 0$, that if there exists a sequence $S_{M_a}, \ldots, S_{M_b}$ w.r.t. $\mathcal{N}_{S_M}$ with $k$ $\overline{cs}$ flips for some $S_{M_a} \in ss_{S_M}(M_a)$ and $S_{M_b} \in ss_{S_M}(M_b)$, then there exists a sequence $M_a,\ldots,M_b$ w.r.t. $\mathcal{N}_M$ with $k$ CI flips. The base case ($k=0$) follows from the fact that any sequence leading from some $S_{M_a} \in ss_{S_M}(M_a)$ to some $S_{M_b} \in ss_{S_M}(M_b)$ with $M_a \neq M_b$ must have at least one $\M$ flip, i.e. $M_a \supset M_b$, and hence $M_a,M_b$ is a sequence consisting only of one $\M$ flip  w.r.t. $\cin_M$. 

For the inductive case assume that there exists a sequence  $S_{M_a}, \ldots, S_{M_b}$ w.r.t. $\mathcal{N}_{S_M}$ with $k+1 \geq 1$ $\overline{cs}$ flips for some $S_{M_a} \in ss_{S_M}(M_a)$ and $S_{M_b} \in ss_{S_M}(M_b)$.  Consider the last $\overline{cs}$ flip $S_{M_c} \flip S_{M_d}$ in the sequence, with $S_{M_c} \in ss_{S_M}(M_c)$, and $S_{M_d} \in ss_{S_M}(M_d)$ for $M_c,M_b \subseteq M$.  By inductive hypothesis, then there is a sequence $M_a,\ldots,M_c$ w.r.t. $\cin_M$ with $k$ CI flips.  By Lemma \ref{lem:red1} also $M_c \flip M_d$ is a CI flip w.r.t. $\cin_M$.  Now, by the same argument as for the base case, if $M_d \neq M_b$, $M_d \supset M_b$ must be the case. Hence  $M_a,\ldots,M_c,M_d,\ldots,M_b$ is a sequence w.r.t. $\cin_M$ with $k$ CI flips where $M_d,\ldots,M_b$ is a sequence of (possibly 0) $\M$ flips.  %
\end{proof}

\begin{corollary} \label{corra} $\cin_M$ is satisfiable iff $\cin_{S_M}$ is satisfiable. 
\end{corollary}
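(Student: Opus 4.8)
The plan is to reduce Corollary \ref{corra} to the equivalence ``unsatisfiable iff there is a non-trivial worsening-flip cycle''. This equivalence is recorded for $\sci$-nets in Section \ref{sec::ci}, and it holds for $\mci$-nets and for $\sci$-nets by the same argument: a sequence $N \flip \cdots \flip N$ of length $\geq 1$ forces $N >_{\cin} N$ in any satisfying preference relation (each $\M$ flip is justified by monotonicity, each CI flip by the satisfaction clause of the triggering statement, and then transitivity closes the loop), contradicting irreflexivity; conversely, if no such cycle exists, the transitive closure of $\flip$ is itself a monotonic strict partial order satisfying all statements. So it suffices to show that $\cin_M$ admits a non-trivial worsening-flip cycle iff $\cin_{S_M}$ does.

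For the direction from $\cin_M$ to $\cin_{S_M}$: suppose $M_a \cseqo{\cin_M} M_a$ along a sequence of length $\geq 1$. This sequence must contain at least one CI flip, since an $\M$ flip $M_c \flip M_d$ requires $M_c \supsetneq M_d$ and hence strictly decreases the total cardinality $\sum_o m_{M_c}(o)$, so a sequence of $\M$ flips alone cannot return to its start. Now apply the first part of Proposition \ref{prop:red} — or, more precisely, the construction in its proof, which converts a sequence with $k$ CI flips into one with $k$ $\overline{cs}$ flips — to obtain $\overline{M_a} \cseqo{\cin_{S_M}} \overline{M_a}$ using at least one $\overline{cs}$ flip, i.e. a non-trivial cycle in $\cin_{S_M}$.

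For the converse: suppose $S \cseqo{\cin_{S_M}} S$ along a sequence of length $\geq 1$ in $\cin_{S_M}$. Putting $m_{M'}(o) := |S \cap [o]_{1,m_M(o)}^F|$ for each $o \in O$ yields an $M' \subseteq M$ with $S \in ss_{S_M}(M')$. The cycle must use at least one $\overline{cs}$ flip: on an $\M$ flip $|S|$ strictly drops, while on a $cs_i$ flip $|S|$ is unchanged and $\sum_{o_i \in S} i$ strictly increases, so along any sequence built only from $\M$ and $cs_i$ flips the pair $(|S_M| - |S|,\ \sum_{o_i \in S} i)$ strictly increases in lexicographic order, which rules out such a sequence being a cycle. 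Applying the second part of Proposition \ref{prop:red} with $S_{M_a} = S_{M_b} = S$ (again tracking that $k$ $\overline{cs}$ flips are turned into $k$ CI flips) gives $M' \cseqo{\cin_M} M'$ using at least one CI flip, i.e. a non-trivial cycle in $\cin_M$.

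The main obstacle is not the overall structure but the bookkeeping that the translated cycles stay non-trivial, i.e. still contain a CI, resp.\ $\overline{cs}$, flip; this is exactly where the two small acyclicity facts above are needed (cardinality strictly decreasing under $\M$ flips, and the lexicographic potential strictly increasing under $\M$ and $cs_i$ flips), together with the quantitative ``same number of CI flips'' content of the proof of Proposition \ref{prop:red} rather than merely its statement. Once that is in place, Corollary \ref{corra} is immediate.
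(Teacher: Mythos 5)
Your proof is correct and takes essentially the route the paper intends: the paper offers no separate argument for Corollary \ref{corra}, deriving it from Proposition \ref{prop:red} together with the characterisation of (un)satisfiability by the existence of cycles of worsening flips, which is exactly your reduction. The extra bookkeeping you supply --- that the transferred cycles retain at least one CI, respectively $\overline{cs}$, flip, extracted from the flip counts in the proof of Proposition \ref{prop:red} rather than its bare statement --- correctly fills in a detail the paper leaves implicit.
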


\begin{corollary} \label{corrb} Let $\cin_M$ be satisfiable.  Then $M_a <_{\cin_M} M_b$ iff 
$\overline{{M_a}} <_{\cin_{S_M}} \overline{{M_b}}$.
\end{corollary}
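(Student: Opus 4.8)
The plan is to obtain Corollary \ref{corrb} by chaining the operational characterisations of dominance (Proposition \ref{seq:m} for $\mci$-nets, and the analogous statement for $\sci$-nets recalled in Section \ref{sec::ci}) with the flip-level correspondence of Proposition \ref{prop:red}, transferring satisfiability between $\cin_M$ and $\cin_{S_M}$ via Corollary \ref{corra}. First I would unfold notation: $M_a <_{\cin_M} M_b$ means $M_b >_{\cin_M} M_a$, and likewise on the $\sci$-side.

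For the left-to-right direction, assume $M_b >_{\cin_M} M_a$. Since $\cin_M$ is satisfiable, Proposition \ref{seq:m} gives $M_b \cseqo{\cin_M} M_a$. The first half of Proposition \ref{prop:red} then yields $\overline{M_b} \cseqo{\cin_{S_M}} \overline{M_a}$. By Corollary \ref{corra}, $\cin_{S_M}$ is satisfiable, so the operational characterisation of dominance for $\sci$-nets gives $\overline{M_b} >_{\cin_{S_M}} \overline{M_a}$, i.e. $\overline{M_a} <_{\cin_{S_M}} \overline{M_b}$.

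For the converse, assume $\overline{M_a} <_{\cin_{S_M}} \overline{M_b}$, i.e. $\overline{M_b} >_{\cin_{S_M}} \overline{M_a}$. Again $\cin_{S_M}$ is satisfiable by Corollary \ref{corra}, so $\overline{M_b} \cseqo{\cin_{S_M}} \overline{M_a}$. Here I would note explicitly that $\overline{M'} \in ss_{S_M}(M')$ for every $M' \subseteq M$, since by construction $\overline{M'}$ is the $>_i$-maximal element of that set; in particular $\overline{M_b} \in ss_{S_M}(M_b)$ and $\overline{M_a} \in ss_{S_M}(M_a)$. Hence the hypothesis of the second half of Proposition \ref{prop:red} holds with $S_{M_b} = \overline{M_b}$ and $S_{M_a} = \overline{M_a}$, giving $M_b \cseqo{\cin_M} M_a$, and Proposition \ref{seq:m} converts this back to $M_b >_{\cin_M} M_a$, i.e. $M_a <_{\cin_M} M_b$.

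There is no real obstacle here; the corollary is bookkeeping on top of Proposition \ref{prop:red}. The one point needing care is the asymmetry between the two halves of that proposition: the first half delivers precisely the canonical representatives $\overline{M_a},\overline{M_b}$, while the second only requires \emph{some} representatives from the $ss_{S_M}$-classes, so I must make explicit that the canonical representatives lie in those classes before feeding them back in. I would also double-check that the direction of the preference ($<$ versus $>$) is handled consistently, since the operational semantics is phrased for $>$.
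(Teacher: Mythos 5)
Your proposal is correct and follows essentially the same route as the paper, which derives Corollary \ref{corrb} directly from Proposition \ref{prop:red} together with the operational characterisations of dominance (Proposition \ref{seq:m} and its $\sci$-net analogue) and Corollary \ref{corra}; your explicit observation that $\overline{M_a} \in ss_{S_M}(M_a)$ and $\overline{M_b} \in ss_{S_M}(M_b)$ is exactly the bookkeeping the paper leaves implicit.
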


\begin{example} \label{ex:76}
\noindent The following is the $\sci$-net corresponding to the $\mci$-net from Example \ref{ex:001m}.  
\begin{align}
&\{a_6\} \vartriangleright \{b_1,\ldots,b_6,c_1,\ldots,c_6,d_1,\ldots,d_6\}; \label{761}\\
&\{a_1\},\emptyset : \{b_6\} \vartriangleright \{c_1,c_2,c_3,d_1,d_2,d_3\}; \label{762}\\
&\{a_1,a_2,a_3\},\{b_3,b_4,b_5,b_6\}: \{c_4,c_5,c_6\} \vartriangleright \{d_1,d_2,d_3\}; \label{763}\\
&\{a_i \vartriangleright a_{i+1} \mid 1 \leq i \leq 5 \}; \label{764}\\
&\{b_i \vartriangleright b_{i+1} \mid 1 \leq i \leq 5 \}; \label{765}\\
&\{c_i \vartriangleright c_{i+1} \mid 1 \leq i \leq 5 \}; \label{766}\\
&\{d_i \vartriangleright d_{i+1} \mid 1 \leq i \leq 5 \} \label{767}
\end{align}
\noindent Here  
\begin{align*}
S_M = \{a_1,\ldots,a_6,b_1,\ldots,b_6,c_1,\ldots,c_6\}.
\end{align*}
\noindent Note that the $\sci$-net has an acyclic depdency graph.  In fact, it is easy to see that the dependency graph of any $\sci$-net resulting from the reduction of reasoning about an $\mci$-net which encodes confined reasoning w.r.t. the $\pci$-net in Example \ref{ex:001} and \emph{any} $M \in \infm$ will be acyclic.  This is an indirect way of showing (via Corollary \ref{cor:091} and the result from \citeauthor{BouveretEL09} mentioned at the end of Section \ref{sec::ci}) that the $\pci$-net in Example \ref{ex:001} is satisfiable.    
Next we give the derivation w.r.t. the $\sci$-net above corresponding to that of Example \ref{ex:0001A}.  
\begin{align*}
&\{a_1,a_2,a_3,b_1,b_2,b_3\}\\
& \ldots (CI,\ref{765}) \\
&\{a_1,a_2,a_3,b_1,b_2,b_6\} \\
&\flip (CI,\ref{762}) \\
&\{a_1,a_2,a_3,b_1,b_2,c_1,c_2,c_3,d_1,d_2,d_3\} \\
&\ldots (CI,\ref{766}-\ref{767}) \\
&\{a_1,a_2,a_3,b_1,b_2,c_4,c_5,c_6,d_4,d_5,d_6\} \\
&\flip (CI,\ref{763}) \\
&\{a_1,a_2,a_3,b_1,b_2,d_1,d_2,d_3,d_4,d_5,d_6\} \\
& \flip  (\M) \\
&\{a_1,a_2,a_3,b_1,b_2,d_1,d_2,d_3,d_4,d_5\} 
\end{align*}
\end{example}

\section{Encoding preferences in evidence aggregation using $\pci$-nets}
\label{sec:app}

In this Section we present a possible use case of $\pci$-nets.  Specifically, in the context of the relatively recent argument-based system for aggregating evidence about treatments resulting from clinical trials presented in \cite{HunterW12,HunterW15}.  We refer to the cited works for an in depth description of the system.  See also \cite{Williams2015290} for results on the use of the system for aggregating evidence from studies about lung chemo-radiotherapy.

\begin{table}[t]
\centering
\resizebox{0.5\textheight}{!}{
\begin{minipage}{\textwidth}

    \begin{tabular}{ | l | l | l | l | l | l | l | l |}
    \hline
    ID & Left & Right & Outcome  & Outcome  & Net  & Sig & Type\\
       &      &       & indicator & value & outcome &  & \\ \hline \hline
    $e_{01}$ & PG & BB & change in IOP & -2.32  & $>$ & no & MA  \\ \hline
    $e_{02}$ & PG & BB & acceptable IOP & 1.54 & $>$ & yes & MA  \\ \hline
    $e_{03}$ & PG & BB & respiratory prob & 0.9  & $>$ & yes & MA  \\ \hline
    $e_{04}$ & PG & BB & respiratory prob & 0.85  & $>$ & yes & MA  \\ \hline
    $e_{05}$ & PG & BB & cardio prob & 0.82  & $>$ & no & MA  \\ \hline    
    $e_{06}$ & PG & BB & hyperaemia & 0.61  & $<$ & yes & MA  \\ \hline
    $e_{07}$ & PG & BB & drowsiness & 0.58  & $<$ & yes & MA  \\ \hline
   $e_{08}$ & PG & BB & drowsiness & 0.71  & $<$ & yes & MA  \\ \hline
   $e_{09}$ & PG & BB & drowsiness & 0.62  & $<$ & yes & MA  \\ \hline
    \end{tabular}

\end{minipage}}
\caption{Normalised results of several meta-analysis studies comparing prostaglandin analogue (PG) and beta-blockers (BB) for patients with raised intraocular pressure.}
\label{tb:2}
\end{table}

In the already mentioned system evidence resulting from clinical trials is initially collected in the form of evidence tables of which Table \ref{tb:2} could be an extract (see Table 3 in \cite{HunterW12} for a larger real-world example on which our example is based).  This table summarises possible results stemming from meta-analysis (therefore the label ``$MA$'' in the column ``Type'') comparing two treatments-, ``$PG$'' for ``prostaglandin analgoue'' and ``$BB$'' for ``beta-blockers'', for patients who have raised intraocular pressure (i.e. raised pressure in the eye) and are therefore at risk of glaucoma with resulting irreversible damage to the optic nerve and retina.  Each evidence item has an associated ``ID'' indicated by the entry in the column ``ID''.   

The meta-analysis compare the treatments w.r.t. different outcome indicators (see column ``Outcome indicator'').  The results of the studies are stored in the column labelled ``Outcome value''.    For simplicity, the outcome values in Table \ref{tb:2} have already been normalised so that the values are desirable.  This means that the outcome value in each row express the degree to which the treatment which has fared better in the study corresponding to the row, has indeed done better. The entry under the column ``Net outcome'' indicates which of the treatments fared better, with ``$>$'' meaning that there were better results for prostaglandin analogue and ``$<$'' indicating that the results speak for beta-blockers.

For respiratory problems (``respiratory prob''), cardiological problems (``cardio prob''), hyperaemia (redness of eyes), and drowsiness the outcome values express the relative risk of suffering the considered problem.  E.g. for cardiological problems (evidence item $e_{05}$), this means that 82 people suffered respiratory problems when taking prostaglandin-analogue for every 100 persons suffering respiratory problems when taking beta-blockers.  ``change in IOP'' and ``acceptable IOP'' are outcome indicators referring to raised intraocular pressure, with negative outcome values being good for the first and outcome values greater than 1 being good for the second of these outcome indicators.  

Now, given evidence such as is summarised in Table \ref{tb:2}, the question is whether prostaglandin analogue or beta-blockers are better to treat glaucoma in patients who have raised intraocular pressure. The obvious problem being that there is evidence in favour of prostaglandin analogue (namely, evidence items $e_{01}-e_{05}$), but there is also evidence in favour of beta-blockers ($e_{06}-e_{09}$).  

A first step towards a solution of this problem is to help out the decision process by determining what sets of evidence items that can be used to argue in favour of the treatments are better in terms of preferences over the outcomes (``Outcome indicator'') and magnitudes (``Outcome value'') of the evidence items in such sets.  Since we consider that the outcome values are normalised as detailed above, we call such ``outcome indicator-value pairs'' ``\emph{benefits}''.  
More to the point, since for methodological reasons (mainly, to avoid bias and for purposes of reuse), preferences need to be determined independently of the available evidence, the preference relation needs to be in terms of \emph{possible} sets of benefits, i.e. all possible sets of pairs of (normalised) outcome indicator-value pairs.  

This process of ordering the evidence in terms of a preference relation on the ``benefit sets'' allows for the incorporation of a ``personalised'' dimension in the decision process, i.e. of considerations which have to do with, for example, a specific patient or the experience of the medical professional.  Other more ``objective'' elements (like the statistical significance of the results obtained via the studies in Table \ref{tb:2} - see the entries under the column ``Sig'') can be incorporated in further stages of the decision process as outlined in \cite{HunterW12,HunterW15}.

In \cite{HunterW12,HunterW15} \citeauthor{HunterW12} only consider the incoporation of preferences between \emph{sets} of benefits in their system and for specifying such preference relations $\sci$-nets are a natural choice (albeit, not one considered in \cite{HunterW12,HunterW15}).  First and most obviously because sets of items need to be compared and secondly, because preferences over sets of benefits can also be considered to be monotonic, i.e. having more evidence in favour of a treatment being beneficial can usually be considered to be better than having less evidence.    

Now, when one considers that there may be more than one evidence item expressing the same benefit, it is clear that at least theoretically the preference relation should be over multi-sets of benefits rather than sets of benefits.  But also from a practical perspective it may often be the case that one should at least allow for both preferences over multi-sets as well as sets of benefits to be specified.
Enabling this option becomes especially relevant when, as will often be the case, one introduces some abstraction over the outcome indicators and values appearing in the benefit sets (thus, simplifying the task of specifying the preference relation).

Example \ref{ex:sp1} illustrates the use of enabling encoding multi-set preferences by giving a specification of a preference relation over multi-sets of benefits such as those appearing in Table \ref{tb:2} but where we introduce a natural abstraction over the outcome indicators and values.  First of all we consider both ``change in IOP'' and ``acceptable IOP'' as part of the ``significant outcomes'' which we denote ``SO''. Secondly, we partition the outcome indicators into ``s'', ``m'', and ``l'' standing for a ``small'', ``medium'', and ``large'' improvement respectively.  We don't go into the details here, but Table \ref{tb:18} shows a possible result of applying this abstraction to the otucome-indicators and values appearing in Table \ref{tb:2}.

\begin{table}[t]
\centering
\resizebox{0.5\textheight}{!}{
\begin{minipage}{\textwidth}

    \begin{tabular}{ | l | l | l | l | l | l | l | l |}
    \hline
    ID & Left & Right & Outcome  & Outcome  & Net  & Sig & Type\\
       &      &       & indicator & value & outcome &  & \\ \hline \hline
    $e_{01}$ & PG & BB & SO & m  & $>$ & no & MA  \\ \hline
    $e_{02}$ & PG & BB & SO & s  & $>$ & yes & MA  \\ \hline
    $e_{03}$ & PG & BB & respiratory prob & s  & $>$ & yes & MA  \\ \hline
    $e_{04}$ & PG & BB & respiratory prob & s  & $>$ & yes & MA  \\ \hline
    $e_{05}$ & PG & BB & cardio prob & s  & $>$ & no & MA  \\ \hline    
    $e_{06}$ & PG & BB & hyperaemia & m  & $<$ & yes & MA  \\ \hline
    $e_{07}$ & PG & BB & drowsiness & m  & $<$ & yes & MA  \\ \hline
   $e_{08}$ & PG & BB & drowsiness & m  & $<$ & yes & MA  \\ \hline
   $e_{09}$ & PG & BB & drowsiness & m  & $<$ & yes & MA  \\ \hline
    \end{tabular}

\end{minipage}}
\caption{Results of meta-analysis comparing prostaglandin analogue (PG) and beta-blockers (BB) with abstractions over outcome indicators and values.}
\label{tb:18}
\end{table}

\begin{example} \label{ex:sp1} The following illustrates the use of $\pci$-nets for the specification of a preference relation over sets of benefits such as appear in Table \ref{tb:18}.  For simplicity we use the shorthands $C$, $D$, $H$, $R$ for $(\textit{cardio prob},s)$, $(\textit{drowsiness}, m)$, $(\textit{hyperaemia},m)$, and $(\textit{respiratory prob},s)$ respectively, while $Sm := (SO,m)$ and $Ss := (SO,s)$.     
\begin{align}
\nrp{Sm}{1} \vartriangleright \snrs{C,D,R,Ss}{} , \nrp{H}{1}; \label{rx11}  \\
\snrp{C,R}{1} \vartriangleright \nrp{D}{1}; \label{rx12}  \\
\ce{C}{0}: \nrp{H}{1} \vartriangleright \nrs{D},\snrp{R,Ss}{1};\label{rx13}  \\
\ce{R}{0}: \nrp{H}{1} \vartriangleright \nrs{D},\snrp{C,Ss}{1}; \label{rx14} \\
\ce{C}{0}: \nrp{D}{2} \vartriangleright \nrp{R}{1};\label{rx15}  \\
\ce{R}{0}: \nrp{D}{2} \vartriangleright \nrp{C}{1};  \label{rx16}\\
\ce{Sm}{0}: \nrp{Ss}{1} \vartriangleright \snrp{C,D,R}{1}; \label{rx17} \\
\cg{Sm}{1}: \snrp{C,R}{1} \vartriangleright \nrp{Ss}{1} \label{rx18}  
\end{align}
\end{example}

In Example \ref{ex:sp1}, the $\pci$-statement \ref{rx11} basically says that evidence showing a medium improvement for any of the significant outcomes is preferred over any number of evidence regarding the side-effects cardio problems, respiratory problems, and drowsiness, as well as evidence showing a small improvement regarding the significant outcomes.  Also having more  evidence (exactly one more piece of evidence) for a medium improvement for the significant outcomes is preferred to having more evidence for a modest improvement w.r.t. hyperaemia ($H$).  

The $\pci$-statement \ref{rx12} states that having more evidence for a small improvement for both cardio and respiratory problems is preferred to having more evidence for a modest improvement regarding drowsiness.  The $\pci$-statements \ref{rx13} to \ref{rx16} state preferences for the scenario where one does not have any evidence for some improvement in regards to one of cardio or respiratory problems.  In this situation, for example having evidence for a modest improvement in hyperaemia is preferred to having evidence for a small improvement in only one of cardio or respiratory problems.  The same holds for drowsiness ($\pci$-statements \ref{rx15} and \ref{rx16}) although the standards here are set a big higher; one needs to have a difference in two studies showing a modest improvement in drowsiness.  

The $\pci$-statement \ref{rx17} states that if one does not have any evidence for a modest improvement in the significant outcomes, then evidence for even a small improvement for any of the significant outcomes is preferred to evidence showing an improvement in cardio problems, respiratory problems, and drowsiness.  On the other hand, if one already has some evidence for a medium improvement in the significant outcomes, then also having more evidence for a small improvement in both cardio and respiratory problems is preferred to having more evidence for a small improvement in the significant outcomes.  

Example \ref{ex:sp2} now shows the encoding of confined reasoning for the $\pci$-net of Example \ref{ex:sp1} w.r.t. the multiset
\begin{align*}
M = \{(C,1),(D,3),(H,1),(R,2),(Sm,1),(Ss,1)\},
\end{align*}
\noindent i.e. the multiset containing all benefits ocurring in Table \ref{tb:18}. Example \ref{ex:sp3} shows the reduction of the $\mci$-net from Example \ref{ex:sp2} to a $\sci$-net.  

\begin{example}\label{ex:sp2}
The following is the encoding of confined reasoning for the $\pci$-net of Example \ref{ex:sp1} w.r.t. the multiset
\begin{align*}
M = \{(C,1),(D,3),(H,1),(R,2),(Sm,1),(Ss,1)\}
\end{align*}
\noindent as detailed in Section \ref{sec:mci}.  For the encoding we interpret $\nrs{o}$ as the maximum number of ocurrences of $o$ in $M$.
\begin{align}
\{(Sm,1)\} \vartriangleright \{(C,1),(D,3),(H,1),(R,2),(Ss,1)\}; \label{r1} \\
\{(C,1),(R,1)\} \vartriangleright \{(D,1)\}; \label{r2} \\
\emptyset, \{(C,1)\}: \{(H,1)\} \vartriangleright \{(D,3),(R,1),(Ss,1)\}; \label{r3} \\
\emptyset, \{(R,2)\}: \{(H,1)\}  \vartriangleright \{(D,3),(C,1),(Ss,1)\}; \label{r4}\\
\emptyset, \{(C,1)\}: \{(D,2)\} \vartriangleright \{(R,1)\}; \label{r5}\\
\emptyset, \{(R,2)\}: \{(D,2)\} \vartriangleright \{(C,1)\}; \label{r6}\\
\emptyset,\{(Sm,1)\}: \{(Ss,1)\} \vartriangleright \{(C,1),(D,1),(R,1)\}; \label{r7} \\
\{(Sm,1)\},\emptyset: \{(C,1),(R,1)\} \vartriangleright \{(Ss,1)\} \label{r8} 
\end{align}
\end{example}

\begin{example}\label{ex:sp3} Next we present the translation of the $\mci$-net from Example \ref{ex:sp2} to a $\sci$-net.
\begin{align*}
\{Sm_1\} \vartriangleright \{C_1,D_1,D_2,D_3,H_1,R_1,R_2,Ss_1\}; \\
\{C_1,R_2\} \vartriangleright \{D_1\}; \\
\emptyset, \{C_1\}: \{H_1\} \vartriangleright \{D_1,D_2,D_3,R_1,Ss_1\}; \\
\emptyset, \{R_1,R_2\}: \{H_1\}  \vartriangleright \{D_1,D_2,D_3,C_1,Ss_1\}; \\
\emptyset, \{C_1\}: \{D_2,D_3\} \vartriangleright \{R_1\}; \\
\emptyset, \{R_1,R_2\}: \{D_2,D_3\} \vartriangleright \{C_1\}; \\
\emptyset,\{Sm_1\}: \{Ss_1\} \vartriangleright \{C_1,D_1,R_1\}; \\
\{Sm_1\},\emptyset: \{C_1,R_2\} \vartriangleright \{Ss_1\}; \\
\{D_1\} \vartriangleright \{D_2\}; \\
\{D_2\} \vartriangleright \{D_3\}; \\
\{R_1\} \vartriangleright \{R_2\} 
\end{align*}
\end{example}

\noindent Figure \ref{fig:11} shows the preference relation induced by the $\mci$-net in Example \ref{ex:sp2}, but considering only sets of benefits which all result from the \emph{same} treatment according to the evidence in Table \ref{tb:18}.

\begin{figure}[t!]

\centering
\resizebox{0.7\textheight}{!}{\begin{minipage}{\textwidth}
\begin{tikzpicture}[main node/.style={fill=none,font=\scriptsize}]

\node[main node] (crrss) at (-7,0) {$\{C,R,R,Sm,Ss\}$};
\node[main node] (crrs) at (-11,-1.5) {$\{C,R,R,Sm\}$};

\node[main node] (crrs2) at (-9,-1.5) {$\{C,R,R,Ss\}$};
\node[main node] (crss) at (-7,-1.5) {$\{C,R,Sm,Ss\}$};

\node[main node] (rrss) at (-5,-1.5) {$\{R,R,Sm,Ss\}$};

\node[main node] (crr) at (-12.4,-3) {$\{C,R,R\}$};
\node[main node] (crs) at (-10.9,-3) {$\{C,R,Sm\}$};
\node[main node] (crs2) at (-9.5,-3) {$\{C,R,Ss\}$};
\node[main node] (css) at (-7.7,-3) {$\{C,Sm,Ss\}$};
\node[main node] (rrs) at (-6.2,-3) {$\{R,R,Sm\}$};
\node[main node] (rrs2) at (-4.7,-3) {$\{R,R,Ss\}$};
\node[main node] (rss) at (-3,-3) {$\{R,Sm,Ss\}$};

\node[main node] (cr) at (-12,-4.5) {$\{C,R\}$};
\node[main node] (cs) at (-10.5,-4.5) {$\{C,Sm\}$};
\node[main node] (cs2) at (-9,-4.5) {$\{C,Ss\}$};
\node[main node] (rr) at (-7.5,-4.5) {$\{R,R\}$};
\node[main node] (rs) at (-6,-4.5) {$\{R,Sm\}$};
\node[main node] (rs2) at (-4.5,-4.5) {$\{R,Ss\}$};
\node[main node] (ss) at (-3,-4.5) {$\{Sm,Ss\}$};

\node[main node] (c) at (-10.5,-6) {$\{C\}$};
\node[main node] (r) at (-8.5,-6) {$\{R\}$};
\node[main node] (s) at (-6.5,-6) {$\{Sm\}$};
\node[main node] (s2) at (-4.5,-6) {$\{Ss\}$};

\node[main node] (dddh) at (-0.5,-0.5) {$\{D,D,D,H\}$};

\node[main node] (ddd) at (-1.5,-2) {$\{D,D,D\}$};
\node[main node] (ddh) at (0.5,-2) {$\{D,D,H\}$};

\node[main node] (dd) at (-1.5,-3.5) {$\{D,D\}$};
\node[main node] (dh) at (0.5,-3.5) {$\{D,H\}$};

\node[main node] (d) at (-1.5,-5) {$\{D\}$};
\node[main node] (h) at (0.5,-5) {$\{H\}$};

\path

(s) edge[->,dashed,out=300,in=280] node{\bf\scriptsize \ref{r1}}(dddh)
(s) edge[->,dashed,out=145,in=280] node{\bf\scriptsize \ref{r1}}(crrs2)

(cr) edge[->,dashed,out=245,in=280] node{\bf\scriptsize \ref{r2}}(d)

(h) edge[->,dashed,out=100,in=300] node{\bf\scriptsize \ref{r3}}(ddd)
(h) edge[->,dashed,out=100,in=300] node{\bf\scriptsize \ref{r3}}(rs2)

(h) edge[->,dashed,out=250,in=300] node{\bf\scriptsize \ref{r4}}(cs2)

(dd) edge[->,dashed,out=250,in=300] node{\bf\scriptsize \ref{r5}}(r)

(dd) edge[->,dashed,out=250,in=300] node{\bf\scriptsize \ref{r6}}(c)

(s2) edge[->,dashed,out=250,in=300] node{\bf\scriptsize \ref{r7}}(d)
(s2) edge[->,dashed,out=150,in=340] node{\bf\scriptsize \ref{r7}}(cr)
(rs2) edge[->,dashed,out=150,in=340] node{\bf\scriptsize \ref{r7}}(crr)

(crs) edge[->,dashed,out=320,in=140] node{\bf\scriptsize \ref{r8}}(ss)
(crrs) edge[->,dashed,out=320,in=150] node{\bf\scriptsize \ref{r8}}(rss)

(crrss) edge[->] (crrs)
(crrss) edge[->] (crrs2)
(crrss) edge[->] (crss)
(crrss) edge[->] (rrss)

(crrs) edge[->] (crr)
(crrs) edge[->] (crs)
(crrs) edge[->] (rrs)

(crrs2) edge[->] (crr)
(crrs2) edge[->] (crs2)
(crrs2) edge[->] (rrs2)

(crss) edge[->] (crs)
(crss) edge[->] (crs2)
(crss) edge[->] (css)
(crss) edge[->] (rss)

(rrss) edge[->] (rrs)
(rrss) edge[->] (rrs2)
(rrss) edge[->] (rss)

(crr) edge[->] (cr)
(crr) edge[->] (rr)

(crs) edge[->] (cr)
(crs) edge[->] (cs)

(crs) edge[->] (rs)

(crs2) edge[->] (cs2)
(crs2) edge[->] (cr)
(crs2) edge[->] (rs2)

(css) edge[->] (cs)
(css) edge[->] (cs2)
(css) edge[->] (ss)

(rrs) edge[->] (rs)
(rrs) edge[->] (rr)

(rrs2) edge[->] (rs2)
(rrs2) edge[->] (rr)

(rss) edge[->] (rs)
(rss) edge[->] (rs2)
(rss) edge[->] (ss)

(cr) edge[->] (c)
(cr) edge[->] (r)

(cs) edge[->] (c)
(cs) edge[->] (s)

(cs2) edge[->] (c)
(cs2) edge[->] (s2)

(rr) edge[->] (r)

(rs) edge[->] (r)
(rs) edge[->] (s)

(rs2) edge[->] (r)
(rs2) edge[->] (s2)

(ss) edge[->] (s)
(ss) edge[->] (s2)

(dddh) edge[->] (ddd)
(dddh) edge[->] (ddh)

(ddd) edge[->] (dd)
(ddh) edge[->] (dd)
(ddh) edge[->] (dh)

(dd) edge[->] (d)
(dh) edge[->] (d)
(dh) edge[->] (h)

;

\end{tikzpicture}
\end{minipage}}
\caption{Graphical representation of the preference relation induced by the $\mci$-net in Example \ref{ex:sp2}, which encodes confined reasoning on the $\pci$-net in Example \ref{ex:sp1} w.r.t. the benefits appearing in Table \ref{tb:18}.  Solid arcs are obtained by monotonicity, dotted arcs are obtained via CI-statements (and monotonicity in some cases).  Transitivity arcs are ommited.  Only sets of benefits which all result from the \emph{same} treatment according to the evidence in Table \ref{tb:18} are shown.}
\label{fig:11}
\end{figure}
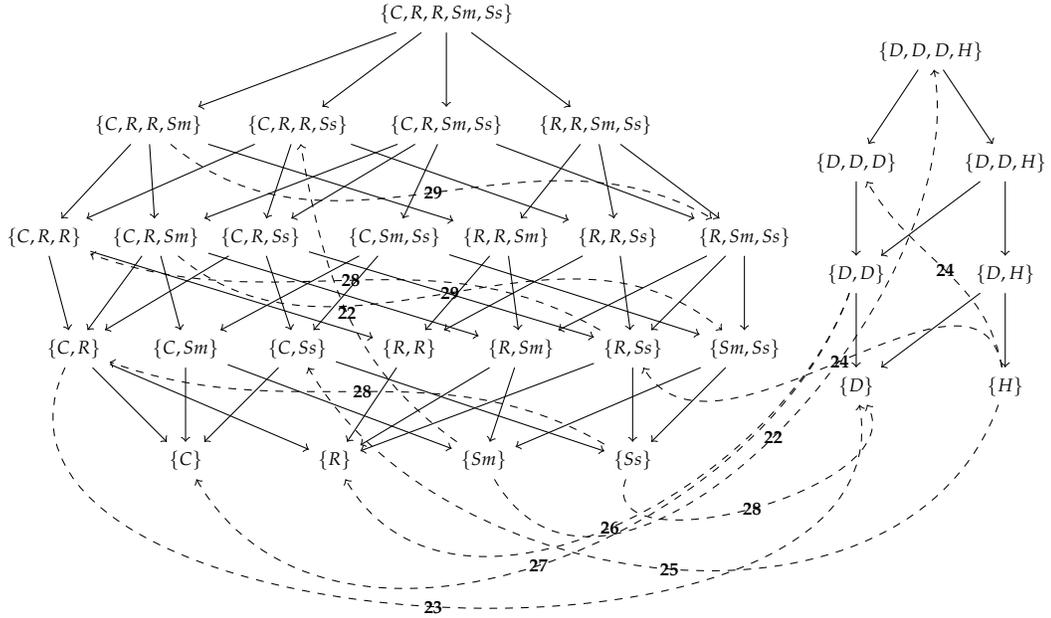

Now, one can take c-consistency w.r.t. $M$ as an additional indicator to trust the specification of preferences in Example \ref{ex:sp1} is consistent (i.e. the $\pci$-net is satisfiable).  Under this assumption, since there is a sequence of worsening flips even from $\{Sm\}$, corresponding to evidence item $e_{01}$ in Table \ref{tb:18}, to $\{D,D,D,H\}$, corresponding to the evidence set $\{e_{06},e_{07},e_{08},e_{09}\}$ in Table \ref{tb:18}, one can conclude from Figure \ref{fig:11} that prostaglandin analogue is preferred to beta-blockers as a treatment for patients who have raised intraocular pressure.  Note, nevertheless that if, for example, one were to remove evidence item $e_{01}$ from Table \ref{tb:18} (e.g. because of statistical significance) then one would not be able to reach a decision between the treatments based on preferences alone.

\section{Conclusion \& future work}
\label{sec:conc}

In this work we presented some initial ideas on how to build a framework for encoding monotonic preferences over multiset of goods on top of $\sci$-nets in the sense that at least a restricted form of reasoning, which we called confined reasoning, can be efficiently reduced to reasoning about $\sci$-nets.  To the best of our knowledge this is the first work considering ordinal multiset preferences, certainly in the context of $\sci$-nets.

Further investigation is required regarding the adequacy of $\pci$-nets (and $\sci$-nets for that manner) for encoding multi-set preferences over goods.  We have shown that $\pci$-nets can be useful in some contexts, such as for encoding some of the purely qualitative preferences we considered in Section \ref{sec:pci} as well as a component of the system of aggregation of evidence stemming from clinical trials by \citeauthor{HunterW12}.  Nevertheless, we have also shown some limitations in our consideration of purely qualitative preferences and it remains to be seen if more complex examples in the context of evidence aggregation can be easily encoded using $\pci$-nets. Some progress may be achieved by introducing a further abstraction layer over $\pci$-nets.  

We have built our framework for encoding multi-set preferences on $\sci$-nets because of our interest in preferences over multi-sets of goods, which are monotonic.  But it would be interesting to explore encoding multi-set preferences using ideas from other formalisms which have been proposed for encoding preferences over sets, in particular that of \citeauthor{BrewkaTW10}(\citeyear{BrewkaTW10}).

In this work we also laid the basic ground-work for computational procedures for reasoning about $\pci$ nets via our characterisation of reasoning about $\pci$-nets in terms of confined reasoning and the reduction of the latter, first to reasoning about $\mci$-nets, and then to $\sci$-nets.  At the very least, our results allow for sound and complete procedures for confined reasoning about $\pci$-nets.  

We note that this situation is in practice less clearly different to that of $\sci$-nets and related formalisms (such as CP nets) than may appear at first glance.  The reason is that the complexity of reasoning about $\sci$ and CP nets (see \cite{GoldsmithLTW08} for the latter) means that one will also for these formalisms usually have to rely on methods which, for example, are either incomplete or only complete for a restricted subset of the formalisms.  Together with the fact that often specifications of preference relations induce preference relations which are themselves incomplete, this situation provides an additional reason for complementing preferences with other techniques, such as is argumentation in the system of \citeauthor{HunterW12}, for purposes of decision making.    

Nevertheless, the question of finding a multi-set that captures the preference relation for a particular evaluation context is an important question that remains largely unexplored in our work.  Also, finding subclasses of $\pci$-nets beyond acyclic ones (or those stemming from $\sci$-nets) where such a multi-set can be found or where satisfiability can be guaranteed are important questions.  Likewise, complexity issues  remain to be explored (in particular, for subclasses of $\pci$ and $\sci$ nets; we note that several results regarding $\sci$-nets, such as that for acyclic nets and also for SCI-nets, may be lifted to $\pci$-nets).

Equally relevant is considering adapting techniques for reasoning about CP nets ``in practice'' such as are considered in \cite{Allen15} (some of which are also discussed for $\sci$-nets in \cite{BouveretEL09}) to the $\pci$-net scenario.  Finally, computational procedures and systems for $\sci$-nets such as presented in \cite{SanthanamBH10,SanthanamBH15,2016Santhanam} can be adapted to $\mci$-nets or, alternatively, be optimised for $\sci$-nets resulting from a reduction from $\mci$-nets.

\clearpage
\newpage

\bibliographystyle{apa}
\bibliography{references}

\end{document}